\documentclass[letterpaper]{article} % DO NOT CHANGE THIS
\usepackage[preprint]{aaai2027}  % DO NOT CHANGE THIS
\usepackage[hyphens]{url}  % DO NOT CHANGE THIS
\usepackage{graphicx} % DO NOT CHANGE THIS
\usepackage{natbib}  % DO NOT CHANGE THIS AND DO NOT ADD ANY OPTIONS TO IT
\usepackage{caption} % DO NOT CHANGE THIS AND DO NOT ADD ANY OPTIONS TO IT
\usepackage{algorithm}
\usepackage{algorithmic}
\usepackage{booktabs}

\usepackage{amsmath,amssymb}
\usepackage{amsthm}
\usepackage{multirow}
\usepackage{placeins}
\usepackage[table]{xcolor}
\usepackage{soul}
\setstcolor{red}   
\usepackage{tikz}

\newtheorem{theorem}{Theorem}

\newtheorem{remark}{Remark}

\newtheorem{corollary}{Corollary}

\newcommand{\filledcircled}[2][\scriptsize]{%
  \tikz[baseline=(char.base)]{
    \node[shape=circle,fill=gray!100,text=white,inner sep=0.8pt] (char) {#1\bfseries #2};
  }%
}

\title{GAUGE: Granularity-Adaptive Counterfactual Gating of Evidence for \\ Incomplete Multimodal Classification}

\author{
     Yunping Shi, En Yu, Kairui Guo, Jie Lu
}
\affiliations{
    Australian Artificial Intelligence Institute (AAII),
    University of Technology Sydney (UTS), Australia \\
    yunping.shi@student.uts.edu.au; \{en.yu-1; kairui.guo; jie.lu\}@uts.edu.au
}

\begin{document}

\maketitle

\begin{abstract}
Multimodal classification typically assumes all modalities are available, yet real-world inputs are often incomplete. Imputation and dynamic fusion can mitigate such incompleteness, but existing methods operate at a coarse modality level and thus cannot retain reliable components while suppressing misleading ones within the same recovered modality, compromising prediction reliability. To address this issue, we propose \textbf{GAUGE}, a lightweight counterfactual gating framework for incomplete multimodal classification. GAUGE first imputes missing modalities with a frozen imputer and encodes observed and recovered inputs uniformly as fine-grained evidence units. Rather than intervening on each unit explicitly, GAUGE scores the counterfactual effect of replacing every unit with a reference representation through prediction-aware Taylor evidence scores, all obtained in a single forward--backward pass. These scores are mapped to continuous gates, which are converted into additive attention-logit biases for unit-wise evidence modulation without altering the backbone architecture. Experiments across six benchmarks demonstrate that GAUGE outperforms strong baselines across diverse incomplete-input settings. Furthermore, a Taylor remainder theoretical analysis characterizes the error of the first-order approximation relative to the exact counterfactual effect, establishing GAUGE as a principled and scalable framework for fine-grained evidence control under modality incompleteness.
\end{abstract}

\section{Introduction}
\label{sec:intro}
Multimodal models have achieved remarkable success in complex perception and reasoning tasks by leveraging complementary information from diverse sources, such as images, text, and sensor data~\cite{liang2022foundations,xu2023multimodal,zhang2025multimodal}. However, they are typically developed under the idealized assumption that all required modalities are available for each input~\cite{ma2021smil,du2024tip,yang2025adapting}. 
In practice, modalities may be missing or inaccessible because of sensor failures, heterogeneous acquisition protocols, privacy constraints, or transmission errors~\cite{sun2024redcore,yao2024drfuse}. Reliable prediction under such incomplete inputs is therefore critical in high-stakes applications such as medical diagnosis~\cite{yao2024drfuse} and autonomous driving~\cite{park2025resilient}.
 
Existing methods for learning with incomplete modalities can be broadly categorized into two paradigms: \emph{recovery-free} and \emph{recovery-based}.
\emph{Recovery-free} methods learn directly from the available modalities~\cite{ma2021smil,wu2024muse,zhang2025synergistic}, avoiding imputation-induced errors but discarding potentially recoverable cross-modal cues. 
In contrast, \emph{recovery-based} approaches explicitly impute missing modalities~\cite{zhang2022m3care,wang2023incomplete} to restore complementary cues, but the reconstructed content may contain low-fidelity, noisy, or semantically inconsistent evidence. Recent quality-aware dynamic fusion~\cite{zhang2023qmf,cao2024pdf} and modality-robustness methods~\cite{wang2024gmd} mitigate this unreliability through weighting,
gradient-guided decoupling, or availability-conditioned parameter switching.
However, these mechanisms operate only at the modality level, applying a shared control decision to the entire encoder output. This coarse-grained approach overlooks within-modality heterogeneity in the reliability of individual components. The limitation is especially pronounced for recovered modalities, where uneven reconstruction errors can cause informative and misleading signals to coexist. Therefore, effective post-recovery control should operate at a finer granularity within each modality, independently modulating its representations according to the contributions to the current prediction.

However, realizing such fine-grained post-recovery control raises three interrelated questions: 
\emph{\textbf{1) At what granularity should control operate?}} 
Modern multimodal architectures employ heterogeneous encoders that produce representation sets with different structures across modalities. Existing mechanisms disregard this encoder-defined structure by assigning a shared control value to the entire representation of each modality~\cite{zhu2026decoupled}. Effective control should therefore operate at the native granularity of each encoder. \emph{\textbf{2) How should control strength be quantified?}}  
The contribution of an individual representation component depends on the current multimodal context and predicted class~\cite{deng2021unified,fan2026enhancing}. Its control strength should be assessed in a prediction-aware manner. A direct criterion is counterfactual replacement, which replaces a component with a reference representation while keeping the remaining representation fixed and measures the magnitude of the resulting change in the predicted-class response.
\emph{\textbf{3) How can control strength be estimated efficiently?}} Although counterfactual replacement yields a decision-aligned score for each component, exact evaluation requires one intervention forward pass per component---\(N\) additional passes for \(N\) components~\cite{fong2017interpretable}. Scalable control therefore requires estimating all component-wise effects using a fixed number of forward--backward scoring passes independent of \(N\).

To address these questions, we propose \textbf{GAUGE} (\textbf{G}ranularity-\textbf{A}daptive co\textbf{U}nterfactual \textbf{G}ating of \textbf{E}vidence), a lightweight counterfactual-attribution gating framework for incomplete multimodal classification. Given an incomplete input, GAUGE recovers the missing modalities with a frozen imputer and encodes each observed or recovered modality into fine-grained \emph{evidence units}\footnote{\emph{Evidence unit} denotes the finest representation emitted by an encoder. It covers fine-grained representations, e.g., image patches, text tokens, and tabular feature embeddings, as well as modality-level embeddings when an encoder emits a single representation.}. This abstraction supports unit-level modulation for multi-unit modalities and reduces to modality-level gating for single-representation modalities. Furthermore, GAUGE defines a representation-level counterfactual effect for each unit by measuring how the predicted-class response changes when that unit is replaced by a reference representation. To avoid exact per-unit interventions, GAUGE derives a signed first-order Taylor approximation and aggregates its channel-wise terms into a non-canceling magnitude score. All scores are obtained with a single ungated forward--backward pass and mapped by a two-scalar gate into additive attention-logit biases, reducing attention to low-scoring units while leaving the Transformer backbone architecture and training objective unchanged. Our contributions are summarized as follows:
\begin{itemize}
\item We formulate a granularity-adaptive evidence control paradigm for incomplete multimodal classification, representing both observed and reconstructed modalities as encoder-emitted evidence units. This abstraction enables fine-grained evidence modulation at the finest available representation level while naturally retaining compatibility with modality-level settings.

\item We propose \textbf{GAUGE}, a lightweight counterfactual-attribution gating framework that modulates evidence according to prediction-aware evidence scores. 
GAUGE formalizes an interpretable unit-wise counterfactual replacement objective and derives a first-order Taylor scoring formulation that obtains all unit-level evidence scores in a single forward--backward pass, 
bypassing the prohibitive overhead of explicit per-unit interventions.

\item Extensive experiments across diverse incomplete-modality settings demonstrate the effectiveness of GAUGE. We further provide a Taylor remainder analysis for the signed first-order approximation and derive an upper bound relating the Taylor evidence score to the magnitude of the exact counterfactual effect.
\end{itemize}

\section{Related Work}
\noindent\textbf{Incomplete Multimodal Learning.}
Existing methods for incomplete multimodal learning broadly follow two paradigms: \emph{recovery-free} and \emph{recovery-based}. 
\emph{Recovery-free} approaches predict directly from available modalities via modality dropout, missing-aware training, prompt tuning, or contrastive learning~\cite{neverova2016moddrop,ma2021smil,lee2023multimodal,wu2024muse,sun2024redcore,wang2018task}, avoiding fabricated content but discarding potentially recoverable cues. 
In contrast, \emph{recovery-based} approaches explicitly reconstruct missing modalities through cross-modal generation, diffusion models, disentanglement, or context-guided completion~\cite{wang2023incomplete,dai2025unbiased,yu2026generalized,liu2025imdr,zhao2026dualstage,huang2026recap}, alongside recent advances in retrieval-augmented prompts~\cite{lang2025ragpt}, feature disentangling~\cite{yao2024drfuse}, pre-training~\cite{du2024tip}, and inference-time selection~\cite{du2026inference}. 
However, these methods predominantly treat each observed or imputed modality as a monolithic decision unit, lacking prediction-aware control at the encoder-emitted feature level. 
GAUGE departs from this paradigm by dynamically re-evaluating both observed and recovered inputs as fine-grained evidence units, rather than treating a reconstructed modality as a static substitute.

\noindent\textbf{Dynamic Multimodal Fusion.}
Dynamic multimodal fusion studies sample-specific modality utility at inference time, such as input-conditioned routing~\cite{xue2023dynamic}, quality-aware weighting~\cite{zhang2023qmf}, predicted modality contributions~\cite{cao2024pdf}, gradient-based decoupling~\cite{wang2024gmd}, and cross-modal enhancement~\cite{chen2026smcir}. 
However, these methods operate exclusively at the coarse modality level, applying a uniform scalar weight, routing score, or gating decision to each modality. 
This formulation conflicts with Transformer encoders, whose fine-grained internal units exhibit heterogeneous prediction relevance that becomes especially restrictive when handling recovered modalities. Our research advances this by shifting dynamic fusion from modality-level weighting to fine-grained unit-level modulation.

\noindent\textbf{Feature Attribution and Counterfactual Reasoning.}
Gradient-based attribution methods characterize reference-based or local changes in model outputs with respect to input features~\cite{sundararajan2017axiomatic,shrikumar2017deeplift,ancona2018unified,li2025causality}. 
In particular, a first-order Taylor expansion around the current representation provides a local approximation of the prediction change induced by replacing a feature with a reference~\cite{deng2021unified}. 
Unlike the post-hoc explainers, GAUGE repurposes such attribution signals as an in-the-loop mechanism, computing unit-level Taylor evidence scores in a single backward pass against a prediction-aware objective to directly drive the attention bias of the underlying model.

\begin{figure*}[th]
  \centering
    \includegraphics[width=0.99\textwidth]{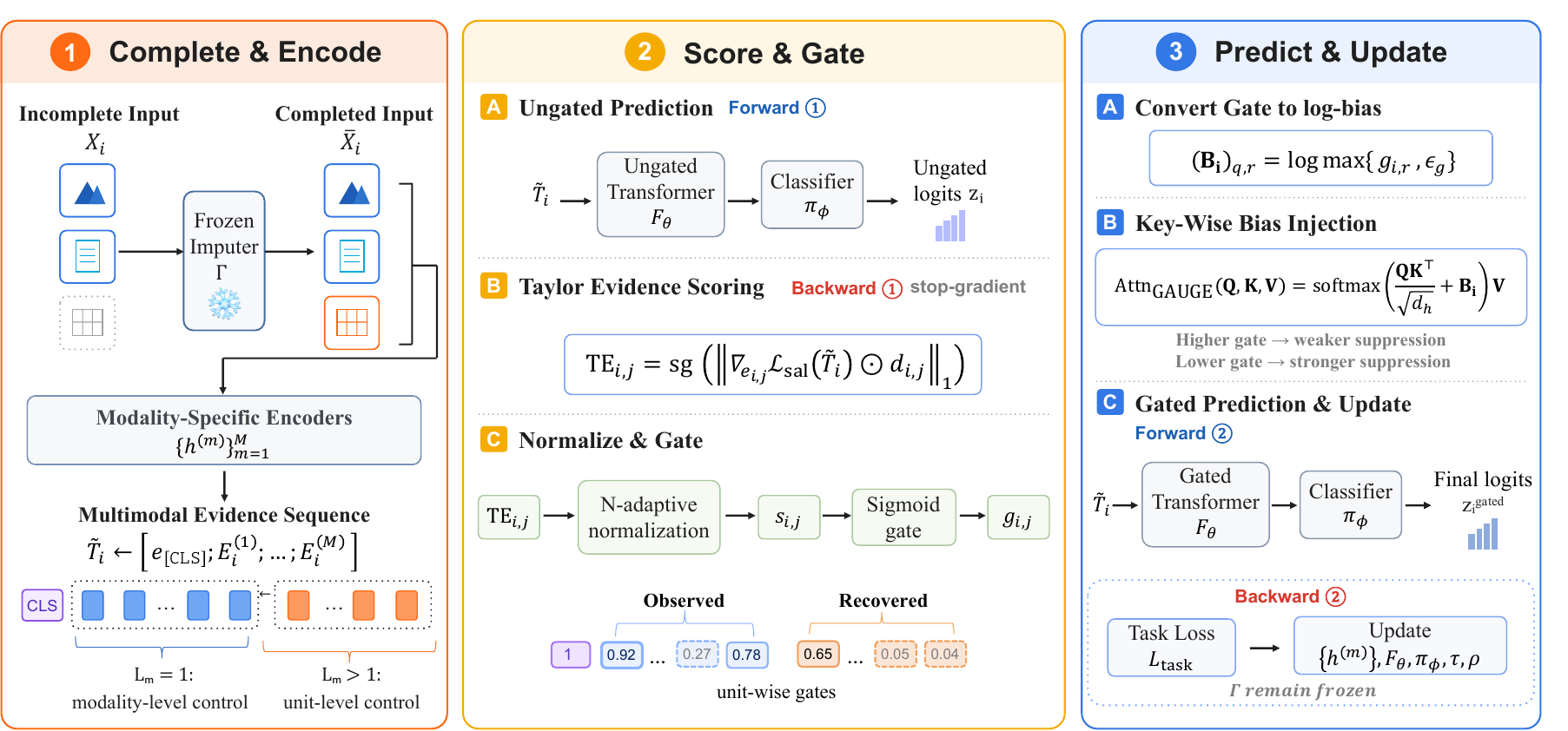}
\caption{Overview of the GAUGE framework:
(1)~complete \& encode;
(2)~score \& gate, where Taylor evidence scores are computed for all
evidence units in a single backward pass; and
(3)~predict \& update via additive attention-bias injection.}
  \label{fig:framework}
\end{figure*}

\section{Methodology}
\label{sec:methodology}

\noindent\textbf{Task Description.}
Let $\mathcal{D}=\{(X_i,y_i)\}_{i=1}^{|\mathcal{D}|}$ be a dataset for incomplete multimodal classification, where $y_i\in\{1,\ldots,K\}$ is the class label among $K$ target classes. 
For each sample $i$ under the incomplete-modality setting, only a subset of modalities $\mathcal{O}_i\subseteq\{1,\ldots,M\}$ is observed, where $M$ denotes the total number of modalities. 
The incomplete input is thus represented as $X_i=\{x_i^{(m)}\}_{m\in\mathcal{O}_i}$, where $x_i^{(m)}$ is the observation from modality $m$, while the missing modalities are indexed by the set $\mathcal{U}_i=\{1,\ldots,M\}\setminus\mathcal{O}_i$. A pre-trained multimodal imputer $\Gamma$ is employed to reconstruct the missing inputs. 
Specifically, each missing modality \(u\in\mathcal{U}_i\) is recovered as
\(\tilde{x}_i^{(u)}=\Gamma_u(X_i)\).
The formulation also extends to feature-level missingness within a partially observed modality, as detailed in Appendix~\ref{app:feature-missing}.
Merging the observed and reconstructed inputs yields the completed input $\bar{X}_i=\{\bar{x}_i^{(m)}\}_{m=1}^{M}$, where $\bar{x}_i^{(m)}=x_i^{(m)}$ if $m\in\mathcal{O}_i$, and $\bar{x}_i^{(m)}=\tilde{x}_i^{(m)}$ otherwise. 
The final goal is to predict $y_i$ given the completed input $\bar{X}_i$.

\subsection{Overview of GAUGE}
\label{subsec:overview}
To achieve this goal, we propose \textbf{GAUGE}, a principled and scalable framework for granularity-adaptive evidence control. 
As illustrated in Fig.~\ref{fig:framework} and summarized in Algorithm~\ref{alg:GAUGE}, GAUGE consists of three phases:

\noindent\textbf{Phase \filledcircled{1} -- Complete \& Encode.}
A frozen imputer $\Gamma$ reconstructs missing modalities to produce the completed multimodal input $\bar{X}_i$. 
Modality-specific encoders $h^{(m)}$ then map each completed modality $\bar{x}_i^{(m)}$ to one or more evidence units, which are concatenated with a learnable $[\mathtt{CLS}]$ token to form the multimodal sequence $\tilde{T}_i$.

\noindent\textbf{Phase \filledcircled{2} -- Score \& Gate.}
Given \(\tilde{T}_i\), GAUGE performs an ungated forward pass to obtain the logits \(z_i\) and the ungated predicted class \(\hat{y}_i\). With \(\hat{y}_i\) fixed, a single backward pass on the saliency objective \(\mathcal{L}_{\mathrm{sal}}(\tilde{T}_i)=-z_{i,\hat{y}_i}\)
yields a magnitude-based \emph{Taylor evidence score} \(\mathrm{TE}_{i,j}\) for each evidence unit along the replacement direction \(d_{i,j}=e'_{i,j}-e_{i,j}\).
GAUGE then applies an \(N\)-adaptive score normalization to obtain \(s_{i,j}\) and maps it to a continuous gate \(g_{i,j}\) using two shared learnable scalars \((\tau,\rho)\).

\noindent\textbf{Phase \filledcircled{3} -- Predict \& Update.}
The resulting gates \(g_{i,j}\) are converted into an additive attention-logit bias matrix \(\mathbf{B}_i\), applied at every Transformer block without altering the underlying backbone architecture.
A gated forward pass on the same sequence \(\tilde{T}_i\) then produces
the final logits \(z_i^{\mathrm{gated}}\) and the final predicted class
\(\hat{y}_i^{\mathrm{final}}\).
During training, we minimize the cross-entropy loss on the gated prediction, which updates the encoders \(\{h^{(m)}\}_{m=1}^{M}\), the backbone \(F_\theta\), the classifier \(\pi_\phi\), and the gate parameters \((\tau,\rho)\); the imputer \(\Gamma\) remains frozen.

\subsection{Granularity-Adaptive Evidence Units}
\label{subsec:evidence_units}
To instantiate the fine-grained evidence control, GAUGE first represents each observed or recovered modality at the finest granularity exposed by its encoder.
For modality \(m\), the encoder \(h^{(m)}\) maps the completed modality
\(\bar{x}_i^{(m)}\) to \(L_m\) evidence units,
\(E_i^{(m)}=h^{(m)}(\bar{x}_i^{(m)})
=[\,e_{i,1}^{(m)},\ldots,e_{i,L_m}^{(m)}\,]
\in\mathbb{R}^{L_m\times C}\), where \(C\) denotes the shared hidden
dimension. When \(L_m>1\), the units may correspond to image patches, text tokens, or tabular feature embeddings; when \(L_m=1\), the modality is represented by a single global embedding.
The evidence units from all modalities are concatenated in a fixed modality
order as
\(T_i=[\,E_i^{(1)};\ldots;E_i^{(M)}\,]
=[\,e_{i,1},\ldots,e_{i,N}\,]\in\mathbb{R}^{N\times C}\),
where \(e_{i,j}\) denotes the \(j\)-th unit in the flattened multimodal
sequence and
\(N=\sum_{m=1}^{M}L_m\).
Because each \(L_m\) is fixed by the encoder configuration, \(N\) is
constant within each dataset.
We prepend a learnable \([\mathtt{CLS}]\) token to form \(\tilde{T}_i=[\,e_{[\mathtt{CLS}]};T_i\,]\in\mathbb{R}^{(N+1)\times C}\); modality and positional embeddings are incorporated but omitted from the notation for brevity. Each encoder-emitted representation thus forms a separate evidence unit, allowing units from the same modality to receive different Taylor evidence scores and gates, while the formulation reduces to modality-level gating whenever \(L_m=1\).

\subsection{Prediction-Aware Counterfactual Formulation}
\label{subsec:counterfactual}
\paragraph{\textit{1) Ungated prediction and saliency objective.}}
To determine each unit's evidence strength in a prediction-aware manner, GAUGE performs an ungated forward pass on \(\tilde{T}_i\) and uses the resulting predicted class as the fixed target for evidence scoring.
The Transformer backbone \(F_\theta\) processes the input sequence \(\tilde{T}_i\), and the classifier head \(\pi_\phi\) maps the resulting \([\mathtt{CLS}]\) representation to class logits, with the corresponding class probabilities defined as:
\begin{equation}
z_i
=
\pi_\phi\!\left(
F_\theta(\tilde{T}_i)_{[\mathtt{CLS}]}
\right),
\qquad
p_i=\mathrm{softmax}(z_i),
\label{eq:logits}
\end{equation}
where \(z_i\in\mathbb{R}^{K}\) denotes the logit vector over the \(K\) classes and \(z_{i,k}\) denotes its \(k\)-th entry.
The ungated predicted class is \(\hat{y}_i=\arg\max_k z_{i,k}\).
We define the prediction-aware saliency objective as
\(\mathcal{L}_{\mathrm{sal}}(\tilde{T}_i)=-z_{i,\hat{y}_i}\), where \(\hat{y}_i\) is fixed during evidence-unit replacement and differentiation.
We adopt this logit-based objective rather than cross-entropy because the latter may yield weak attribution gradients for confident predictions; a detailed comparison is provided in Appendix~\ref{app:logit-vs-ce}.

\begin{algorithm}[t]
\footnotesize
\caption{\textbf{GAUGE}: Minibatch Training}
\label{alg:GAUGE}
\begin{algorithmic}[1]
\REQUIRE Minibatch $\{(X_i,y_i)\}_{i\in\mathcal B}$;
frozen imputer $\Gamma$; trainable parameters
$\Theta=\bigl\{\{h^{(m)}\}_{m=1}^{M},
F_\theta,\pi_\phi,\tau,\rho\bigr\}$.
\ENSURE Updated parameters $\Theta$.
\FOR{$i\in\mathcal B$}
  \STATE Form the completed input $\bar X_i$ from $X_i$
  using frozen $\Gamma$.
  \STATE Compute
  $E_i^{(m)}\leftarrow h^{(m)}(\bar x_i^{(m)})$
  for $m=1,\ldots,M$.
  \STATE $\tilde T_i\leftarrow
  [\,e_{[\mathtt{CLS}]};E_i^{(1)};\ldots;E_i^{(M)}\,]$
  in fixed modality order.
  \STATE Compute $z_i$ (Eq.~\ref{eq:logits}) and set
  $\hat y_i\leftarrow\arg\max_k z_{i,k}$.
  \COMMENT{$\hat y_i$ fixed}
\ENDFOR
\STATE $\mathcal L_{\mathrm{sal}}^{\mathcal B}
\leftarrow-\sum_{i\in\mathcal B}z_{i,\hat y_i}$.
\STATE Differentiate $\mathcal L_{\mathrm{sal}}^{\mathcal B}$ once to obtain
all $\nabla_{e_{i,j}}\mathcal L_{\mathrm{sal}}(\tilde T_i)$
for $i\in\mathcal B$ and $j=1,\ldots,N$;
do not accumulate gradients for $\Theta$.

\FOR{$i\in\mathcal B$}
  \FOR{$j=1,\ldots,N$}
    \STATE $e'_{i,j}\leftarrow\mathbf 0$;\quad
    $d_{i,j}\leftarrow e'_{i,j}-e_{i,j}$.
\STATE $\mathrm{TE}_{i,j}\leftarrow
\operatorname{sg}\!\left(
\left\|
\nabla_{e_{i,j}}\mathcal L_{\mathrm{sal}}(\tilde T_i)
\odot d_{i,j}
\right\|_1
\right)$.
  \ENDFOR
  \STATE Compute $s_{i,j}$ and $g_{i,j}$ for $j=1,\ldots,N$
  by Eqs.~\ref{eq:zscore} and~\ref{eq:gate}.
  \STATE Set $g_{i,0}\leftarrow1$ and construct
  $\mathbf B_i$ (Eq.~\ref{eq:bias_def}).
  \STATE $z_i^{\mathrm{gated}}\leftarrow
  \pi_\phi\!\left(
  F_\theta(\tilde T_i;\mathbf B_i)_{[\mathtt{CLS}]}
  \right)$.
  \STATE $p_i^{\mathrm{gated}}\leftarrow
  \operatorname{softmax}(z_i^{\mathrm{gated}})$.
\ENDFOR
\STATE Compute $\mathcal L_{\mathrm{task}}^{\mathcal B}$
(Eq.~\ref{eq:loss}), backpropagate, and update $\Theta$.
\end{algorithmic}
\end{algorithm}

\noindent
\paragraph{\textit{2) Representation-level counterfactual replacement.}}
For the \(j\)-th evidence unit \(e_{i,j}\) in the flattened multimodal sequence \(T_i\), where \(j\in\{1,\ldots,N\}\), we construct a representation-level counterfactual by replacing \(e_{i,j}\) with a reference representation \(e'_{i,j}\). Specifically, we set \(e'_{i,j}=\mathbf{0}\in\mathbb{R}^{C}\). The resulting evidence sequence is denoted by
\(T_i^{(j\leftarrow e'_{i,j})} = [\,e_{i,1},\ldots,e'_{i,j},\ldots,e_{i,N}\,]\),
and its \([\mathtt{CLS}]\)-prepended counterpart by
\(\tilde{T}_i^{(j\leftarrow e'_{i,j})}
= [\,e_{[\mathtt{CLS}]}; 
T_i^{(j\leftarrow e'_{i,j})}\,]\).
The \([\mathtt{CLS}]\) token is never replaced. The exact counterfactual effect of unit \(j\) is then defined as
\begin{equation}
\mathrm{CEff}_{i,j}^{\mathrm{exact}}=
\mathcal{L}_{\mathrm{sal}}\!\left(\tilde{T}_i^{(j\leftarrow e'_{i,j})}\right)
-
\mathcal{L}_{\mathrm{sal}}(\tilde{T}_i).
\label{eq:exact_ceff}
\end{equation}
However, evaluating Eq.~\eqref{eq:exact_ceff} for all \(N\) evidence units requires \(N\) additional forward passes per sample, motivating the pass-efficient Taylor scoring formulation introduced next.

\subsection{Efficient Taylor Gating via Attention Bias}
\label{subsec:attn_bias_inject}
\paragraph{\textit{1) First-order counterfactual approximation.}}
To overcome this bottleneck, GAUGE avoids per-unit intervention forward passes by approximating each exact counterfactual effect with a signed first-order Taylor term.
For evidence unit \(e_{i,j}\), let $d_{i,j}=e'_{i,j}-e_{i,j}$ denote the replacement direction from $e_{i,j}$ to its reference representation $e'_{i,j}$.
A first-order Taylor expansion of \(\mathcal{L}_{\mathrm{sal}}\) around
\(e_{i,j}\) along \(d_{i,j}\) yields
\begin{equation}
\mathrm{CEff}_{i,j}^{\mathrm{exact}}
=
\left\langle
\nabla_{e_{i,j}}\mathcal{L}_{\mathrm{sal}}(\tilde{T}_i),
d_{i,j}
\right\rangle
+
R_{i,j},
\label{eq:taylor_expansion}
\end{equation}
where \(R_{i,j}\) denotes the Taylor remainder.
We denote the signed first-order term by
\(a_{i,j}=
\langle
\nabla_{e_{i,j}}\mathcal{L}_{\mathrm{sal}}(\tilde{T}_i),
d_{i,j}
\rangle\).
Thus, Eq.~\eqref{eq:taylor_expansion} can be written as
\(\mathrm{CEff}_{i,j}^{\mathrm{exact}}=a_{i,j}+R_{i,j}\).
Assuming that \(\mathcal{L}_{\mathrm{sal}}\) is twice continuously
differentiable along the replacement segment and that the spectral norm
of its Hessian with respect to \(e_{i,j}\) is bounded by \(H_{i,j}\)
throughout this segment, the Taylor remainder satisfies \(|R_{i,j}|\le\tfrac{H_{i,j}}{2}\|d_{i,j}\|_2^2\). The full derivation is provided in Appendix~\ref{app:te}.

\noindent
\paragraph{\textit{2) Taylor evidence scoring.}}
The signed first-order term \(a_{i,j}\) is the sum of the channel-wise
gradient--direction products, so contributions with opposite signs can
cancel.
To obtain a non-canceling magnitude score, we instead
aggregate the absolute channel-wise products using the \(\ell_1\) norm,
inspired by gradient-based attribution methods~\cite{
sundararajan2017axiomatic,
shrikumar2017deeplift,
deng2021unified}:
\begin{equation}
\begin{aligned}
\mathrm{TE}_{i,j}
&=
\operatorname{sg}\!\left(
\sum_{c=1}^{C}
\left|
\frac{\partial \mathcal{L}_{\mathrm{sal}}(\tilde{T}_i)}
{\partial e_{i,j,c}}\,
d_{i,j,c}
\right|
\right) \\
&=
\operatorname{sg}\!\left(
\left\|
\nabla_{e_{i,j}}\mathcal{L}_{\mathrm{sal}}(\tilde{T}_i)
\odot d_{i,j}
\right\|_1
\right).
\end{aligned}
\label{eq:taylor_score}
\end{equation}

Here, \(\operatorname{sg}(\cdot)\) denotes the stop-gradient operator: it acts as the identity in the forward pass and has zero derivative during backpropagation. It therefore leaves both \(\mathrm{TE}_{i,j}\) and the bound below numerically unchanged, while preventing task-loss gradients from propagating through the gradient computation used to form the score during training.
Applying the triangle inequality to their sum yields \(|a_{i,j}|\le\mathrm{TE}_{i,j}\).
Combined with the second-order remainder bound, this leads to the following upper bound on the magnitude of the exact counterfactual effect:
\begin{equation}
|\mathrm{CEff}_{i,j}^{\mathrm{exact}}|
\le
\mathrm{TE}_{i,j}
+
\frac{H_{i,j}}{2}
\|d_{i,j}\|_2^2.
\label{eq:te_ceff_bound}
\end{equation}
This bound is established in Corollary~\ref{cor:upper} of Appendix~\ref{app:te-l1}. Accordingly, \(\mathrm{TE}_{i,j}\) quantifies the aggregate magnitude of
the channel-wise first-order contributions induced by replacing unit
\(j\) along \(d_{i,j}\).
All \(N\) gradients \(\{\nabla_{e_{i,j}}\mathcal{L}_{\mathrm{sal}}(\tilde{T}_i)\}_{j=1}^{N}\) are obtained simultaneously from a single ungated backward pass, enabling
GAUGE to compute all evidence-unit scores without the \(N\) additional intervention forward passes required for exact counterfactual evaluation.

\paragraph{\textit{3) \(N\)-adaptive score normalization.}}
Raw Taylor evidence scores can differ substantially in scale across samples because their magnitudes depend jointly on the prediction gradient and the
replacement direction, both of which vary across samples.
Since the gate uses a threshold and temperature shared across units and samples, this variation can hinder consistent gate calibration
across samples.
GAUGE therefore normalizes the \(N\) Taylor evidence scores of each sample using the following \(N\)-adaptive rule before gate construction:
\begin{equation}
s_{i,j}
=
\begin{cases}
\mathrm{TE}_{i,j},
& N\le 3,\\[3pt]
\dfrac{\mathrm{TE}_{i,j}-\mu_i}
{\max(\sigma_i,\epsilon)},
& N>3,
\end{cases}
\label{eq:zscore}
\end{equation}
where for \(N>3\),
$
\mu_i
=
\frac{1}{N}
\sum_{\ell=1}^{N}\mathrm{TE}_{i,\ell}$, $
\sigma_i^2
=
\frac{1}{N-1}
\sum_{\ell=1}^{N}
(\mathrm{TE}_{i,\ell}-\mu_i)^2,
$
and \(\epsilon>0\) is a small constant for numerical stability.
For \(N>3\), this per-sample standardization removes the sample-specific offset and scale, making \(s_{i,j}\) a relative score within the sample.
For \(N\le3\), statistics based on at most three scores can yield highly constrained standardized values and remove informative absolute magnitude, so GAUGE retains the raw scores.
Because \(N\) is fixed by the encoder layout, each model uses the same normalization branch throughout training and inference. Appendix~\ref{app:norm-ablation} evaluates this normalization rule.

\paragraph{\textit{4) Continuous gate construction.}}
Given the evidence score \(s_{i,j}\), GAUGE maps it to a continuous gate
\(g_{i,j}\in(0,1)\):
\begin{equation}
g_{i,j}
=
\operatorname{sigmoid}\!\left(
\frac{s_{i,j}-\tau}{\exp(\rho)}
\right).
\label{eq:gate}
\end{equation}
Here, \(\tau\in\mathbb{R}\) is a learnable threshold, with \(g_{i,j}=0.5\) at \(s_{i,j}=\tau\), and
\(\rho\in\mathbb{R}\) parameterizes the positive temperature \(\exp(\rho)\), which controls the transition sharpness.
This monotonic mapping assigns larger gates to higher-scoring units, enabling graded modulation rather than hard selection.
The scalars \((\tau,\rho)\) are shared across all samples and evidence units, adding only two parameters regardless of the modality or unit count.
The \([\mathtt{CLS}]\) token, indexed by \(j=0\), serves as the aggregation anchor rather than an evidence unit and is therefore kept ungated (\(g_{i,0}=1\)).

\paragraph{\textit{5) Key-wise attention-bias injection.}}
GAUGE converts each gate into an additive attention-logit bias at the key position in each Transformer block, augmenting standard scaled dot-product attention~\cite{vaswani2017attention}:
\begin{equation}
\mathrm{Attn}_{\mathrm{GAUGE}}
(\mathbf{Q},\mathbf{K},\mathbf{V})
=
\mathrm{softmax}\!\left(
\frac{\mathbf{Q}\mathbf{K}^{\top}}{\sqrt{d_h}}
+\mathbf{B}_i
\right)\mathbf{V},
\label{eq:attn_bias}
\end{equation}
where \(d_h\) denotes the key dimension of each attention head, and
\(\mathbf{B}_i\in\mathbb{R}^{(N+1)\times(N+1)}\) is defined as
\begin{equation}
(\mathbf{B}_i)_{q,r}=\log\max\{g_{i,r},\epsilon_g\},
\qquad q,r\in\{0,\ldots,N\},
\label{eq:bias_def}
\end{equation}
where \(q\) and \(r\) index the query and key positions, respectively, and \(\epsilon_g\in(0,1)\) is a small constant that prevents \(\log 0\) and keeps the bias finite. 
Index \(0\) corresponds to the \([\mathtt{CLS}]\) token, whose gate is fixed to \(g_{i,0}=1\), yielding zero attention bias.
Because \((\mathbf{B}_i)_{q,r}\) depends only on the key index \(r\), the same unit-level bias is applied at every query position and shared across attention heads. Since \(0<\max\{g_{i,r},\epsilon_g\}\le1\), all entries of
\(\mathbf{B}_i\) are non-positive. Adding \(\log\max\{g_{i,r},\epsilon_g\}\) multiplies the unnormalized attention
weight of key \(r\) by \(\max\{g_{i,r},\epsilon_g\}\).
Smaller gates therefore more strongly down-weight the corresponding keys, while preserving the sequence length and backbone architecture.

\subsection{Training and Inference}
\label{subsec:training}
GAUGE follows the same imputation-and-gating pipeline during training and inference, with the imputer \(\Gamma\) kept frozen. 
During training, given a minibatch \(\mathcal B\), we minimize the cross-entropy loss on the gated prediction
\(p_i^{\mathrm{gated}}=\operatorname{softmax}(z_i^{\mathrm{gated}})\):
\begin{equation}
\mathcal{L}_{\mathrm{task}}^{\mathcal B}
=
-\frac{1}{|\mathcal B|}
\sum_{i\in\mathcal B}
\log p_i^{\mathrm{gated}}(y_i).
\label{eq:loss}
\end{equation}
This objective updates the modality encoders \(\{h^{(m)}\}_{m=1}^{M}\), Transformer backbone \(F_\theta\),
classifier \(\pi_\phi\), and gate parameters \((\tau,\rho)\).
The Taylor evidence scores are detached before normalization and gate construction, so task optimization does not backpropagate through the saliency-gradient computation or require second-order differentiation.
Each training step comprises one ungated forward--backward scoring pass and one gated forward--backward optimization pass.
At inference, the same ungated forward--backward scoring pass is followed by one gated forward pass to produce the prediction.

\begin{table*}[!t]
\centering
\setlength{\tabcolsep}{3pt}
\renewcommand{\arraystretch}{1.1}
\resizebox{\textwidth}{!}{%
\begin{tabular}{l ccc ccc ccc ccc ccc ccc}
\toprule
\multirow{3}{*}{Model}
& \multicolumn{3}{c}{PolyMNIST Acc (\%) $\uparrow$}
& \multicolumn{3}{c}{MST Acc (\%) $\uparrow$}
& \multicolumn{3}{c}{CelebA Acc (\%) $\uparrow$}
& \multicolumn{3}{c}{DVM Acc (\%) $\uparrow$}
& \multicolumn{3}{c}{CAD AUC (\%) $\uparrow$}
& \multicolumn{3}{c}{Infarction AUC (\%) $\uparrow$} \\
\cmidrule(lr){2-4} \cmidrule(lr){5-7} \cmidrule(lr){8-10}
\cmidrule(lr){11-13} \cmidrule(lr){14-16} \cmidrule(lr){17-19}
& \multicolumn{3}{c}{Missing Rate $\eta$}
& \multicolumn{6}{c}{Missing Modalities}
& \multicolumn{9}{c}{Missing Tabular Rate $\gamma$} \\
\cmidrule(lr){2-4} \cmidrule(lr){5-10} \cmidrule(lr){11-19}
& 0 & 0.6 & 0.8
& $\emptyset$ & \{S,T\} & \{M,T\}
& $\emptyset$ & \{I\} & \{T\}
& 0 & 0.9 & 1
& 0 & 0.9 & 1
& 0 & 0.9 & 1 \\
\midrule
\multicolumn{19}{c}{\textit{(a) Recovery-based Methods for Missing Modality}} \\
\midrule
MultiAE          & 99.77 & 95.36 & 84.39 & 99.96 & 97.00 & 81.60 & 89.98 & 15.51 & 89.71 & - & - & - & - & - & - & - & - & - \\
MultiAE$^\dagger$ & 99.94 & 97.50 & 89.86 & 99.87 & \underline{98.33} & 83.44 & 88.66 & 72.04 & 87.44 & - & - & - & - & - & - & - & - & - \\
MoPoE            & 99.79 & 93.94 & 79.84 & 99.62 & 90.86 & 79.01 & 38.97 & 13.91 & 34.84 & - & - & - & - & - & - & - & - & - \\
MoPoE$^\dagger$  & 99.63 & 96.81 & 87.06 & 99.39 & 96.50 & 82.54 & 68.22 & 56.90 & 65.75 & - & - & - & - & - & - & - & - & - \\
M3Care           & 99.93 & 56.66 & 40.53 & \underline{99.99} & 16.03 & 9.34 & 92.33 & 99.92 & 51.75 & 98.44 & - & 11.92 & \textbf{85.62} & - & 64.99 & 70.61 & - & 70.53 \\
M3Care$^\dagger$ & \underline{99.99} & 97.27 & 87.92 & 99.98 & 98.27 & \underline{85.16} & 98.73 & 97.14 & \underline{91.32} & 98.94 & - & \underline{93.43} & 72.48 & - & \textbf{72.48} & 83.27 & - & 68.44 \\
OnlineMAE        & \textbf{100.00} & 98.29 & 90.09 & 99.90 & 98.14 & 84.14 & 86.67 & 86.67 & 86.67 & 90.92 & - & 89.90 & 85.22 & - & \underline{70.96} & 84.05 & - & 61.39 \\
\midrule
\multicolumn{19}{c}{\textit{(b) Recovery-free Methods for Missing Modality}} \\
\midrule
ModDrop          & 99.97 & 97.66 & 88.44 & \textbf{100.00} & 98.21 & 82.47 & 99.93 & 99.93 & 87.32 & 99.02 & 89.80 & 87.97 & 85.10 & 70.77 & 69.18 & 84.76 & 72.06 & \underline{72.16} \\
MTL              & 99.97 & 98.43 & 91.14 & 99.96 & \textbf{98.60} & 84.37 & 99.69 & 99.26 & 89.38 & \textbf{99.44} & 93.38 & 92.32 & 84.87 & \underline{73.08} & 70.23 & 83.59 & 69.82 & 69.90 \\
MAP              & 99.86 & 43.00 & 23.19 & \textbf{100.00} & 9.83 & 10.13 & \underline{99.98} & 99.93 & 85.33 & 98.86 & 74.88 & 63.15 & 84.39 & 71.39 & 70.11 & 84.62 & 68.47 & 69.17 \\
MAP$^\dagger$    & \underline{99.99} & 96.74 & 76.20 & \underline{99.99} & 97.84 & 11.36 & \textbf{100.00} & \underline{99.99} & 86.06 & \underline{99.37} & 92.43 & 91.15 & \underline{85.26} & 70.36 & 68.76 & 85.49 & 71.38 & 70.81 \\
MUSE             & 99.93 & 94.73 & 77.56 & 99.86 & 97.14 & 35.96 & 99.93 & 99.86 & 88.35 & 96.86 & - & 1.64 & 83.47 & - & 53.23 & 84.40 & - & 66.78 \\
\midrule
\multicolumn{19}{c}{\textit{(c) Dynamic Recovery Method for Missing Modality}} \\
\midrule
DyMo$_c$ & \textbf{100.00} & \textbf{99.79} & \underline{96.74} & \underline{99.99} & 97.61 & 84.11 & \textbf{100.00} & \textbf{100.00} & 87.26 & 99.34 & \underline{93.48} & 92.87 & 84.12 & 71.29 & 68.31 & \underline{86.96} & \underline{73.70} & 71.36 \\
\rowcolor{gray!15}
GAUGE & \textbf{100.00} & \underline{99.67} & \textbf{97.61} & \underline{99.99} & 97.86 & \textbf{85.26} & \textbf{100.00} & \textbf{100.00} & \textbf{93.16} & 99.30 & \textbf{94.79} & \textbf{94.42} & 84.68 & \textbf{74.20} & 70.41 & \textbf{87.12} & \textbf{74.37} & \textbf{72.86} \\
\bottomrule
\end{tabular}%
}
\caption{Comparison of incomplete-modality methods across six benchmarks.
\textbf{Bold} indicates the best result, and \underline{underline} the
second best; ties are both marked.
Except for the reproduced DyMo$_c$, baseline results are from
\citet{du2026inference} under the same missingness protocol; $\dagger$ denotes their incomplete-simulation
variants, and ``-'' marks unsupported settings
(Appendix~\ref{app:baselines}).
}
\label{tab:missing-modality}
\end{table*}

\section{Experiments}
\label{sec:experiments}
\subsection{Experiment Settings}
\paragraph{Datasets and Evaluation Metrics.} We evaluate GAUGE on six multimodal classification benchmarks,  
including three simulated datasets~\cite{sutter2021generalized}: PolyMNIST, MST, and bimodal CelebA, and two real-world image--tabular datasets: Deep Visual Marketing (DVM)~\cite{huang2022dvm} and UK Biobank (UKBB)~\cite{sudlow2015uk}.
For UKBB, we consider two cardiac disease classification benchmarks: coronary artery disease (CAD) and myocardial infarction (Infarction). We report the area under the curve (AUC) on the two UKBB benchmarks and accuracy on the remaining four benchmarks. Dataset details and missingness protocols are provided in Appendix~\ref{app:datasets}.

\noindent\paragraph{Baselines.}
We compare GAUGE with nine baselines: four recovery-based methods (MultiAE~\cite{ngiam2011multimodal}, MoPoE~\cite{sutter2021generalized}, M3Care~\cite{zhang2022m3care}, and OnlineMAE~\cite{woo2023towards}), four recovery-free methods (ModDrop~\cite{neverova2016moddrop}, MTL~\cite{ma2022missing}, MAP~\cite{lee2023multimodal}, and MUSE~\cite{wu2024muse}), and the dynamic-recovery baseline DyMo$_c$~\cite{du2026inference}. Implementation details are provided in Appendix~\ref{app:impl}.

\begin{figure}[t]
  \centering
    \includegraphics[width=\columnwidth]{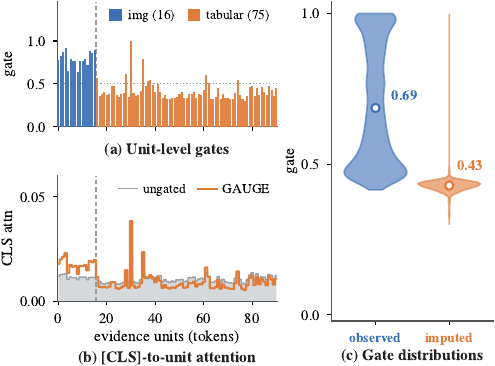}
\caption{Unit-level gating and attention redistribution on the Infarction dataset at missing rate \(\gamma=1\).}
\label{fig:gate}
\end{figure}

\subsection{Main Results \& Analysis}
\paragraph{Quantitative Analysis.} 
GAUGE achieves the best or tied-best performance in 9 of 12 incomplete-input settings reported in Table~\ref{tab:missing-modality}. 
Compared with \emph{recovery-free} and \emph{recovery-based} baselines, GAUGE's stronger performance reflects the \emph{discarding--imputation dilemma}~\cite{du2026inference}: discarding missing modalities may forfeit recoverable task-relevant cues, whereas indiscriminately incorporating imputed content may introduce task-irrelevant noise from unreliable reconstructions. GAUGE mitigates this trade-off through finer-grained evidence modulation. We further compare GAUGE with DyMo$_c$, which uses the same backbone and frozen imputation pipeline but operates at the coarse modality level. GAUGE outperforms DyMo$_c$ in 10 of 12 incomplete-input settings, with larger gains under severe missingness. Specifically, it improves accuracy by $5.90$ percentage points on CelebA $\{\mathrm{T}\}$, with the text modality entirely missing, and yields consistent gains on CAD and Infarction. These gains under severe missingness indicate that, as predictions rely more heavily on reconstructed inputs, unit-wise gating is more effective than modality-level weighting because it down-weights lower-scoring components without uniformly attenuating the recovered modality.

\paragraph{Qualitative Analysis.} Fig.~\ref{fig:gate} visualizes the unit-level gates and final-layer $[\mathtt{CLS}]$-to-unit attention for an Infarction sample at $\gamma=1$ ($16$ observed image units and $75$ imputed tabular units). 
As shown in Fig.~\ref{fig:gate}(a), observed image units generally receive higher gates, whereas the $75$ imputed tabular units receive lower but nonuniform gates.
This within-modality variation demonstrates that GAUGE differentiates encoder-emitted units via their prediction-aware Taylor evidence scores, avoiding the rigidity of modality-level assignment. 
Fig.~\ref{fig:gate}(b) illustrates the resulting attention redistribution. 
Because the gates are injected as key-wise log-biases, lower gates introduce more negative biases, down-weighting lower-scoring units relative to higher-scoring ones.
Across the test set, the mean gate is $0.69$ for observed units and $0.43$ for imputed units (Fig.~\ref{fig:gate}(c)), showing that GAUGE assigns lower gates on average to reconstructed evidence while retaining substantial variation among units within the recovered modality.
Crucially, this nonuniform gating highlights a limitation of modality-level control: a shared modality weight cannot down-weight low-scoring recovered components without simultaneously down-weighting higher-scoring ones from the same modality.

\begin{table}[h]
\centering
\small
\renewcommand{\arraystretch}{1}
\resizebox{\columnwidth}{!}{%
\begin{tabular}{l cccc}
\toprule
\multirow{2}{*}{Method}
 & PolyMNIST & DVM & CAD & Infarction \\
 & $\eta{=}0.8$ & $\gamma{=}1$ & $\gamma{=}1$ & $\gamma{=}1$ \\
\midrule
GAUGE$_{v1}$           & 87.21 & 91.24 & 68.22 & 71.34 \\
GAUGE$_{v2}$ & 97.35 & 94.18 & 69.13 & 68.32 \\
GAUGE$_{v3}$     & 97.23 & 94.27 & 68.95 & 69.52 \\
GAUGE             & \textbf{97.61} & \textbf{94.42} & \textbf{70.41} & \textbf{72.86} \\
\bottomrule
\end{tabular}%
}
\caption{Ablation of fine-grained evidence control under severe missingness.}
\label{tab:ablation-granularity}
\end{table}

\subsection{Ablation Study}
To evaluate the necessity of fine-grained evidence control, we compare
{GAUGE} with three alternatives shown in Table~\ref{tab:ablation-granularity}.
{GAUGE$_{v1}$} performs inference directly on the completed inputs
without fine-tuning or evidence gating.
{GAUGE$_{v2}$} uses the same missingness-aware fine-tuning
as full GAUGE but fixes all gates to one, yielding zero attention bias;
{GAUGE$_{v3}$} shares one gate across all evidence units within
each modality; and
{GAUGE} assigns a separate gate to each evidence unit.
{GAUGE} consistently achieves the best performance across all four benchmarks with multi-unit modalities.
The advantage over modality-level gating (GAUGE$_{v3}$) is
particularly pronounced on clinical benchmarks with many evidence units
in the recovered tabular modality, reaching \(3.34\) AUC points on
Infarction. Notably, modality-level gating can underperform {GAUGE$_{v1}$}, as a
shared modality gate cannot suppress low-scoring units without simultaneously
down-weighting high-scoring ones from the same modality; unit-level gating
mitigates this limitation by assigning a separate gate to each evidence unit.
Further ablations on evidence-unit scoring, score normalization,
and evidence-unit granularity are provided in Appendix~\ref{app:supp}.

\subsection{Additional Analysis}
\label{subsec:efficiency}
\paragraph{Efficiency Analysis.}
We compare GAUGE with exact counterfactual scoring across four configurations with different evidence-unit counts \(N\). Table~\ref{tab:cost} shows that exact scoring time increases markedly with \(N\), whereas GAUGE remains within \(3.11\)--\(3.60\)\,ms. At \(N{=}91\), GAUGE reduces the per-sample scoring time from \(96.43\)\,ms to \(3.11\)\,ms, yielding a \(31.01\times\) speedup. These results show that GAUGE avoids per-unit intervention overhead and supports scalable fine-grained evidence scoring. Detailed timing protocol is in Appendix~\ref{app:cost}.

\begin{table}[h]
\centering
\renewcommand{\arraystretch}{1}
\resizebox{\columnwidth}{!}{%
\begin{tabular}{lcccc}
\toprule
Dataset & $N$ & GAUGE (ms) & Exact (ms) & Speedup \\
\midrule
MST       &  3 & 3.15 &  4.51 &  1.43$\times$ \\
PolyMNIST & 20 & 3.60 & 26.30 &  7.31$\times$ \\
DVM       & 33 & 3.18 & 39.06 & 12.28$\times$ \\
CAD       & 91 & 3.11 & 96.43 & 31.01$\times$ \\
\bottomrule
\end{tabular}}
\caption{Per-sample scoring time for GAUGE and exact counterfactual scoring on a Quadro RTX~8000 with batch size \(1\).}
\label{tab:cost}
\end{table}

\begin{figure}[t]
\centering
\includegraphics[width=0.93\columnwidth]{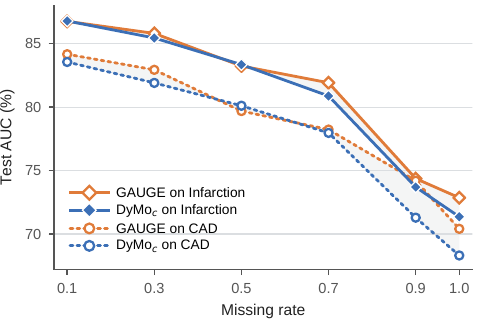}
\caption{Test AUC (\%) on CAD and Infarction under varying tabular
missing rates \(\gamma\).}
\label{fig:missing_rate}
\end{figure}

\paragraph{Performance under Increasing Missingness.}
To evaluate robustness across missing rates, we compare GAUGE and DyMo$_c$ on CAD and Infarction. As shown in Fig.~\ref{fig:missing_rate}, GAUGE outperforms DyMo$_c$ in most scenarios, demonstrating consistent adaptability from mild to complete tabular absence. 
Crucially, GAUGE's advantage becomes increasingly pronounced as missingness rises. 
When missingness is mild ($\gamma \le 0.5$) and observed features dominate, both methods perform comparably. However, once imputation becomes dominant ($\gamma \ge 0.7$), GAUGE generally exhibits a larger advantage over DyMo$_c$. At extreme missingness ($\gamma = 1.0$), GAUGE surpasses DyMo$_c$ by $2.10$ and $1.50$ percentage points in AUC on CAD and Infarction, respectively. This further demonstrates that fine-grained evidence control becomes increasingly important when models rely heavily on reconstructed content.

\section{Conclusion and Limitations}
In this paper, we introduced GAUGE, a principled and scalable framework for granularity-adaptive evidence control in incomplete multimodal classification. 
By formulating a unit-wise counterfactual gating mechanism driven by pass-efficient Taylor evidence scoring, GAUGE effectively transitions multimodal evidence modulation from coarse modality-level aggregation to the finest granularity exposed by Transformer encoders. 
Extensive experiments across diverse benchmarks demonstrate that GAUGE consistently outperforms state-of-the-art baselines, exhibiting exceptional robustness under severe modality incompleteness. However, GAUGE currently relies on the pre-trained backbones and frozen imputers inherited from existing baselines. Future work will explore extending this pass-efficient counterfactual gating paradigm to broader multimodal reasoning and generation architectures.

\section*{Acknowledgments}
The work was supported by the Australian Research Council (ARC) under Laureate project FL190100149. This research has been conducted using the UK Biobank Resource under Project Number 105141.

\bibliography{ref}

@inproceedings{zhang2025multimodal,
  title={Multimodal inverse attention network with intrinsic discriminant feature exploitation for fake news detection},
  author={Zhang, Tianlin and Yu, En and Shao, Yi and Sun, Jiande},
  booktitle={Proceedings of the Thirty-Fourth International Joint Conference on Artificial Intelligence},
  pages={7940--7948},
  year={2025}
}

@inproceedings{yang2025adapting,
title={Adapting Multi-modal Large Language Model to Concept Drift From Pre-training Onwards},
author={Xiaoyu Yang and Jie Lu and En Yu},
booktitle={The Thirteenth International Conference on Learning Representations},
year={2025},
url={https://openreview.net/forum?id=b20VK2GnSs}
}

@inproceedings{li2025causality,
  title={Causality-aligned Prompt Learning via Diffusion-based Counterfactual Generation},
  author={Li, Xinshu and Wang, Ruoyu and Gao, Erdun and Gong, Mingming and Yao, Lina},
  booktitle={Proceedings of the 33rd ACM International Conference on Multimedia},
  pages={5208--5217},
  year={2025}
}

@article{wang2018task,
  title={Task-dependent and query-dependent subspace learning for cross-modal retrieval},
  author={Wang, Li and Zhu, Lei and Yu, En and Sun, Jiande and Zhang, Huaxiang},
  journal={IEEE Access},
  volume={6},
  pages={27091--27102},
  year={2018},
  publisher={IEEE}
}

@inproceedings{yu2026generalized,
  title={Generalized incremental learning under concept drift across evolving data streams},
  author={Yu, En and Lu, Jie and Zhang, Guangquan},
  booktitle={Proceedings of the ACM Web Conference 2026},
  pages={3905--3916},
  year={2026}
}

@article{liang2022foundations,
  title={Foundations \& trends in multimodal machine learning: Principles, challenges, and open questions},
  author={Liang, Paul Pu and Zadeh, Amir and Morency, Louis-Philippe},
  journal={ACM computing surveys},
  volume={56},
  number={10},
  pages={1--42},
  year={2024}
}

@article{neverova2016moddrop,
  title={{ModDrop}: adaptive multi-modal gesture recognition},
  author={Neverova, Natalia and Wolf, Christian and Taylor, Graham and Nebout, Florian},
  journal={IEEE Transactions on Pattern Analysis and Machine Intelligence},
  volume={38},
  number={8},
  pages={1692--1706},
  year={2016},
  publisher={IEEE}
}

@inproceedings{ma2021smil,
  title={{SMIL}: Multimodal learning with severely missing modality},
  author={Ma, Mengmeng and Ren, Jian and Zhao, Long and Tulyakov, Sergey and Wu, Cathy and Peng, Xi},
  booktitle={Proceedings of the AAAI conference on artificial intelligence},
  volume={35},
  number={3},
  pages={2302--2310},
  year={2021}
}

@inproceedings{ma2022missing,
  title={Are multimodal transformers robust to missing modality?},
  author={Ma, Mengmeng and Ren, Jian and Zhao, Long and Testuggine, Davide and Peng, Xi},
  booktitle={Proceedings of the IEEE/CVF conference on computer vision and pattern recognition},
  pages={18177--18186},
  year={2022}
}

@inproceedings{lee2023multimodal,
  title={Multimodal prompting with missing modalities for visual recognition},
  author={Lee, Yi-Lun and Tsai, Yi-Hsuan and Chiu, Wei-Chen and Lee, Chen-Yu},
  booktitle={Proceedings of the IEEE/CVF Conference on Computer Vision and Pattern Recognition},
  pages={14943--14952},
  year={2023}
}

@inproceedings{wu2024muse,
  title={Multimodal patient representation learning with missing modalities and labels},
  author={Wu, Zhenbang and Dadu, Anant and Tustison, Nicholas and Avants, Brian and Nalls, Michael and Sun, Jimeng and Faghri, Faraz},
  booktitle={International Conference on Learning Representations},
  year={2024}
}

@inproceedings{sutter2021generalized,
  title={Generalized Multimodal {ELBO}},
  author={Sutter, Thomas M. and Daunhawer, Imant and Vogt, Julia E.},
  booktitle={International Conference on Learning Representations},
  year={2021}
}

@inproceedings{zhang2022m3care,
  title={{M3Care}: Learning with missing modalities in multimodal healthcare data},
  author={Zhang, Chaohe and Chu, Xu and Ma, Liantao and Zhu, Yinghao and Wang, Yasha and Wang, Jiangtao and Zhao, Junfeng},
  booktitle={Proceedings of the 28th ACM SIGKDD conference on knowledge discovery and data mining},
  pages={2418--2428},
  year={2022}
}

@article{wang2023incomplete,
  title={Incomplete multimodality-diffused emotion recognition},
  author={Wang, Yuanzhi and Li, Yong and Cui, Zhen},
  journal={Advances in Neural Information Processing Systems},
  volume={36},
  pages={17117--17128},
  year={2023}
}

@inproceedings{yao2024drfuse,
  title={{DrFuse}: Learning disentangled representation for clinical multi-modal fusion with missing modality and modal inconsistency},
  author={Yao, Wenfang and Yin, Kejing and Cheung, William K and Liu, Jia and Qin, Jing},
  booktitle={Proceedings of the AAAI conference on artificial intelligence},
  volume={38},
  number={15},
  pages={16416--16424},
  year={2024}
}

@inproceedings{du2024tip,
  title={{TIP}: Tabular-image pre-training for multimodal classification with incomplete data},
  author={Du, Siyi and Zheng, Shaoming and Wang, Yinsong and Bai, Wenjia and O'Regan, Declan P. and Qin, Chen},
  booktitle={European Conference on Computer Vision},
  pages={478--496},
  year={2024},
  publisher={Springer}
}

@inproceedings{du2026inference,
  title     = {Inference-Time Dynamic Modality Selection for Incomplete Multimodal Classification},
  author    = {Du, Siyi and Luo, Xinzhe and O'Regan, Declan P. and Qin, Chen},
  booktitle = {International Conference on Learning Representations (ICLR)},
  year      = {2026}
}

@inproceedings{xue2023dynamic,
  title={Dynamic Multimodal Fusion},
  author={Xue, Zihui and Marculescu, Radu},
  booktitle={2023 IEEE/CVF Conference on Computer Vision and Pattern Recognition Workshops (CVPRW)},
  pages={2575--2584},
  year={2023},
  organization={IEEE}
}

@inproceedings{shrikumar2017deeplift,
  title={Learning important features through propagating activation differences},
  author={Shrikumar, Avanti and Greenside, Peyton and Kundaje, Anshul},
  booktitle={International conference on machine learning},
  pages={3145--3153},
  year={2017},
  organization={PMLR}
}

@inproceedings{wang2024gmd,
  title={Gradient-guided modality decoupling for missing-modality robustness},
  author={Wang, Hao and Luo, Shengda and Hu, Guosheng and Zhang, Jianguo},
  booktitle={Proceedings of the AAAI Conference on Artificial Intelligence},
  volume={38},
  number={14},
  pages={15483--15491},
  year={2024}
}

@inproceedings{zhang2023qmf,
  title={Provable dynamic fusion for low-quality multimodal data},
  author={Zhang, Qingyang and Wu, Haitao and Zhang, Changqing and Hu, Qinghua and Fu, Huazhu and Zhou, Joey Tianyi and Peng, Xi},
  booktitle={International conference on machine learning},
  pages={41753--41769},
  year={2023},
  organization={PMLR}
}

@inproceedings{cao2024pdf,
  title={Predictive Dynamic Fusion},
  author={Cao, Bing and Xia, Yinan and Ding, Yi and Zhang, Changqing and Hu, Qinghua},
  booktitle={International Conference on Machine Learning},
  pages={5608--5628},
  year={2024},
  organization={PMLR}
}

@inproceedings{sundararajan2017axiomatic,
  title={Axiomatic attribution for deep networks},
  author={Sundararajan, Mukund and Taly, Ankur and Yan, Qiqi},
  booktitle={International conference on machine learning},
  pages={3319--3328},
  year={2017},
  organization={PMLR}
}

@article{vaswani2017attention,
  title={Attention is all you need},
  author={Vaswani, Ashish and Shazeer, Noam and Parmar, Niki and Uszkoreit, Jakob and Jones, Llion and Gomez, Aidan N and Kaiser, {\L}ukasz and Polosukhin, Illia},
  journal={Advances in neural information processing systems},
  volume={30},
  year={2017}
}

@inproceedings{deng2021unified,
  title={A unified {T}aylor framework for revisiting attribution methods},
  author={Deng, Huiqi and Zou, Na and Du, Mengnan and Chen, Weifu and Feng, Guocan and Hu, Xia},
  booktitle={Proceedings of the AAAI Conference on Artificial Intelligence},
  volume={35},
  number={13},
  pages={11462--11469},
  year={2021}
}

@inproceedings{huang2022dvm,
  title={{DVM-CAR}: A large-scale automotive dataset for visual marketing research and applications},
  author={Huang, Jingmin and Chen, Bowei and Luo, Lan and Yue, Shigang and Ounis, Iadh},
  booktitle={2022 IEEE International Conference on Big Data (Big Data)},
  pages={4140--4147},
  year={2022},
  organization={IEEE}
}

@article{sudlow2015uk,
title={{UK Biobank}: An Open Access Resource for Identifying the Causes of a Wide Range of Complex Diseases of Middle and Old Age},
  author={Sudlow, Cathie and Gallacher, John and Allen, Naomi and Beral, Valerie and Burton, Paul and Danesh, John and Downey, Paul and Elliott, Paul and Green, Jane and Landray, Martin and others},
  journal={PLOS Medicine},
  volume={12},
  number={3},
  pages={e1001779},
  year={2015},
  publisher={Public Library of Science}
}

@inproceedings{ngiam2011multimodal,
  title={Multimodal Deep Learning},
  author={Ngiam, Jiquan and Khosla, Aditya and Kim, Mingyu and Nam, Juhan and Lee, Honglak and Ng, Andrew Y.},
  booktitle={Proceedings of the 28th International Conference on Machine Learning},
  pages={689--696},
  year={2011}
}

@inproceedings{woo2023towards,
  title={Towards good practices for missing modality robust action recognition},
  author={Woo, Sangmin and Lee, Sumin and Park, Yeonju and Nugroho, Muhammad Adi and Kim, Changick},
  booktitle={Proceedings of the AAAI Conference on Artificial Intelligence},
  volume={37},
  number={3},
  pages={2776--2784},
  year={2023}
}

@inproceedings{sun2024redcore,
  title={{RedCore}: Relative advantage aware cross-modal representation learning for missing modalities with imbalanced missing rates},
  author={Sun, Jun and Zhang, Xinxin and Han, Shoukang and Ruan, Yu-Ping and Li, Taihao},
  booktitle={Proceedings of the AAAI Conference on Artificial Intelligence},
  volume={38},
  number={13},
  pages={15173--15182},
  year={2024}
}

@inproceedings{
ancona2018unified,
title={Towards better understanding of gradient-based attribution methods for Deep Neural Networks},
author={Ancona, Marco and Ceolini, Enea and {\"O}ztireli, Cengiz and Gross, Markus},
booktitle={International Conference on Learning Representations},
year={2018},
url={https://openreview.net/forum?id=Sy21R9JAW},
}

@inproceedings{zhang2025synergistic,
  title={Synergistic prompting for robust visual recognition with missing modalities},
  author={Zhang, Zhihui and Dai, Luanyuan and Lin, Qika and Diao, Yunfeng and Jin, Guangyin and Guo, Yufei and Zhang, Jing and Hao, Xiaoshuai},
  booktitle={Proceedings of the IEEE/CVF International Conference on Computer Vision},
  pages={1881--1890},
  year={2025}
}

@inproceedings{lang2025ragpt,
  title={Retrieval-augmented dynamic prompt tuning for incomplete multimodal learning},
  author={Lang, Jian and Cheng, Zhangtao and Zhong, Ting and Zhou, Fan},
  booktitle={Proceedings of the AAAI Conference on Artificial Intelligence},
  volume={39},
  number={17},
  pages={18035--18043},
  year={2025}
}

@inproceedings{liu2025imdr,
  title={Incomplete modality disentangled representation for ophthalmic disease grading and diagnosis},
  author={Liu, Chengzhi and Huang, Zile and Chen, Zhe and Tang, Feilong and Tian, Yu and Xu, Zhongxing and Luo, Zihong and Zheng, Yalin and Meng, Yanda},
  booktitle={Proceedings of the AAAI Conference on Artificial Intelligence},
  volume={39},
  number={5},
  pages={5361--5369},
  year={2025}
}

@inproceedings{chen2026smcir,
  title={Sample-specific modality diagnosis and cross-modal enhancement for incomplete multimodal representations},
  author={Chen, Junsong and Liu, Jiyuan and Liu, Suyuan and Zhang, Wei and Li, Ao and Zhu, En and Liu, Xinwang},
  booktitle={Proceedings of the AAAI Conference on Artificial Intelligence},
  volume={40},
  number={24},
  pages={20154--20162},
  year={2026}
}

@inproceedings{zhao2026dualstage,
  title={Tackling dual-stage missing modalities in brain tumor segmentation via robust modality reconstruction and prompt-guided modality adaptation},
  author={Zhao, Yunpeng and Chen, Cheng and Pang, Qing You and Fu, Yibing and Li, Quanzheng and Tang, Carol and Ang, Beng Ti and Jin, Yueming},
  booktitle={Proceedings of the AAAI Conference on Artificial Intelligence},
  volume={40},
  number={16},
  pages={13314--13322},
  year={2026}
}

@inproceedings{huang2026recap,
  title={Recovering coherent affective patterns: Addressing modality missing in multimodal sentiment analysis},
  author={Huang, Huiting and Gong, Tieliang and He, Kai and Wen, Wen and Zhang, Weizhan and Feng, Mengling},
  booktitle={Proceedings of the AAAI Conference on Artificial Intelligence},
  volume={40},
  number={26},
  pages={21957--21965},
  year={2026}
}

@inproceedings{park2025resilient,
  title={Resilient sensor fusion under adverse sensor failures via multi-modal expert fusion},
  author={Park, Konyul and Kim, Yecheol and Kim, Daehun and Choi, Jun Won},
  booktitle={Proceedings of the IEEE/CVF Conference on Computer Vision and Pattern Recognition},
  pages={6720--6729},
  year={2025}
}

@article{xu2023multimodal,
  title={Multimodal learning with transformers: A survey},
  author={Xu, Peng and Zhu, Xiatian and Clifton, David A},
  journal={IEEE Transactions on Pattern Analysis and Machine Intelligence},
  volume={45},
  number={10},
  pages={12113--12132},
  year={2023},
  publisher={IEEE}
}

@inproceedings{fong2017interpretable,
  title={Interpretable explanations of black boxes by meaningful perturbation},
  author={Fong, Ruth C and Vedaldi, Andrea},
  booktitle={Proceedings of the IEEE international conference on computer vision},
  pages={3429--3437},
  year={2017}
}

@InProceedings{zhu2026decoupled,
    author    = {Zhu, Aoqiang and Hu, Min and Xing, Yan and Tang, Yiming},
    title     = {Decoupled Sub-Feature Uncertainty Modeling for Robust Multimodal Representation Learning},
    booktitle = {Proceedings of the IEEE/CVF Conference on Computer Vision and Pattern Recognition (CVPR) Findings},
    month     = {June},
    year      = {2026},
    pages={6921--6931}
}

@inproceedings{fan2026enhancing,
  title={Enhancing interpretability for vision models via {S}hapley value optimization},
  author={Fan, Kanglong and Yang, Yunqiao and Ma, Chen},
  booktitle={Proceedings of the AAAI Conference on Artificial Intelligence},
  volume={40},
  number={5},
  pages={3786--3794},
  year={2026}
}

@inproceedings{dai2025unbiased,
  title     = {Unbiased Missing-modality Multimodal Learning},
  author    = {Dai, Ruiting and Li, Chenxi and Yan, Yandong and Mo, Lisi and Qin, Ke and He, Tao},
  booktitle = {Proceedings of the IEEE/CVF International Conference on Computer Vision (ICCV)},
  pages     = {24507--24517},
  year      = {2025},
  doi       = {10.1109/ICCV51701.2025.02272}
}
\clearpage
\newpage
\appendix
\setcounter{secnumdepth}{2}
\section{Theoretical Analysis}
\label{app:te}

This appendix analyzes the Taylor evidence score used in GAUGE. It first defines the exact unit-level counterfactual effect and derives a first-order approximation with a remainder bound. It then examines the non-canceling aggregation and explains the choice of a logit-based saliency objective.
\subsection{Intervention Setup and Notation}
\label{app:te-setup}
We first establish the intervention notation used to define the exact counterfactual effect, its signed first-order
approximation, and the resulting Taylor evidence score. For sample \(i\), replacing evidence unit \(e_{i,j}\) in \(\tilde{T}_i = [\,e_{[\mathtt{CLS}]};e_{i,1},\ldots,e_{i,N}\,]\) with a reference representation \(e'_{i,j}\) yields \(\tilde{T}_i^{(j\leftarrow e'_{i,j})}\), with replacement direction \(d_{i,j}=e'_{i,j}-e_{i,j}\).
Throughout this work, we use the zero reference \(e'_{i,j}=\mathbf{0}\), while the \([\mathtt{CLS}]\) token is never replaced.

\subsection{Exact Unit-Level Counterfactual Effect}
\label{app:direct-te}
The saliency objective is \(\mathcal{L}_{\mathrm{sal}}(\tilde{T}_i)=-z_{i,\hat{y}_i}\), the negative logit of the ungated predicted class. Here $\hat{y}_i$ is determined once from the unreplaced sequence $\tilde{T}_i$ and is held constant under both evidence-unit replacement and differentiation: we do not differentiate through the $\arg\max$, and if the predicted class changes after replacement, $\mathcal{L}_{\mathrm{sal}}(\cdot)$ still denotes the negative logit of the original ungated predicted class $\hat{y}_i$. With $\hat{y}_i$ fixed, the exact counterfactual effect of unit $j$ is defined as:
\[
\mathrm{CEff}_{i,j}^{\mathrm{exact}}
=\mathcal{L}_{\mathrm{sal}}\!\big(\tilde{T}_i^{(j\leftarrow e'_{i,j})}\big)
-\mathcal{L}_{\mathrm{sal}}(\tilde{T}_i),
\]
and its magnitude $|\mathrm{CEff}_{i,j}^{\mathrm{exact}}|$ measures the response change induced by replacing unit $j$ with its reference representation. Computing this for all units requires one counterfactual forward pass per unit, i.e., $N$ additional forward passes after the ungated forward pass. GAUGE instead constructs first-order local surrogates for all \(N\) effects using a single ungated forward--backward scoring pass.

\subsection{First-Order Counterfactual Approximation}
\label{app:first-order-taylor}
Because evaluating the exact effect separately for every evidence unit is computationally expensive, we derive a local first-order surrogate
whose unit-wise values can be obtained from a single backward pass. Fixing all evidence units except $e_{i,j}$, regard the saliency objective as a function of this unit alone:
\begin{equation}
\psi_{i,j}(v)
=\mathcal{L}_{\mathrm{sal}}\!\big(\tilde{T}_i^{(j\leftarrow v)}\big).
\label{eq:app-psi}
\end{equation}
With $\hat{y}_i$ fixed, $\psi_{i,j}$ is a scalar function of $v$, with $\psi_{i,j}(e_{i,j})=\mathcal{L}_{\mathrm{sal}}(\tilde{T}_i)$ and
$\psi_{i,j}(e'_{i,j})
=\mathcal{L}_{\mathrm{sal}}(\tilde{T}_i^{(j\leftarrow e'_{i,j})})$, so that
$\mathrm{CEff}_{i,j}^{\mathrm{exact}}
=\psi_{i,j}(e'_{i,j})-\psi_{i,j}(e_{i,j})$.
A first-order Taylor expansion of $\psi_{i,j}$ at $e_{i,j}$ gives
\begin{equation}
\mathrm{CEff}_{i,j}^{\mathrm{exact}}
\approx
a_{i,j}
:=\big\langle
\nabla_{e_{i,j}}\mathcal{L}_{\mathrm{sal}}(\tilde{T}_i),\, d_{i,j}
\big\rangle,
\label{eq:app-lsal-first-order}
\end{equation}
which is the first-order term of
Eq.~\eqref{eq:taylor_expansion} in the main text; we call $a_{i,j}$ the \emph{signed first-order term}. This formulation makes all unit-wise first-order terms available from a single backward pass. The following analysis first characterizes its approximation error and then addresses the cancellation that can arise when channel-wise contributions are summed.

\subsection{Taylor Remainder and Approximation Error}
\label{app:te-thm}
We now characterize the approximation error of the signed first-order term relative to the exact counterfactual effect under a local smoothness condition.
\begin{theorem}[First-order Counterfactual Approximation]
\label{thm:te}
Let
\[
\Omega_{i,j}=\{\,e_{i,j}+\alpha\, d_{i,j}:\alpha\in[0,1]\,\}
\]
be the line segment between $e_{i,j}$ and $e'_{i,j}$. Assume that $\psi_{i,j}$ is twice continuously differentiable on a neighborhood of
$\Omega_{i,j}$ and that the unit-wise Hessian is uniformly bounded on this segment:
\[
\sup_{v\in\Omega_{i,j}}
\big\|\nabla^2\psi_{i,j}(v)\big\|_{\mathrm{op}}\le H_{i,j}.
\]
Then the remainder
$R_{i,j}:=\mathrm{CEff}_{i,j}^{\mathrm{exact}}-a_{i,j}$ satisfies
\begin{equation}
\mathrm{CEff}_{i,j}^{\mathrm{exact}}
=a_{i,j}+R_{i,j},
\qquad
|R_{i,j}|\le\frac{H_{i,j}}{2}\,\|d_{i,j}\|_2^2,
\label{eq:te-remainder}
\end{equation}
recovering Eq.~\eqref{eq:taylor_expansion} of the main text with an explicit remainder bound. Consequently,
\begin{equation}
\Big|\,\big|\mathrm{CEff}_{i,j}^{\mathrm{exact}}\big|-|a_{i,j}|\,\Big|
\le
\frac{H_{i,j}}{2}\,\|d_{i,j}\|_2^2,
\label{eq:te-abs}
\end{equation}
i.e., $|a_{i,j}|$ approximates the magnitude of the exact counterfactual effect up to a second-order error.
\end{theorem}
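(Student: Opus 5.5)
The plan is to reduce the statement to a one-dimensional Taylor expansion along the replacement segment $\Omega_{i,j}$. Define $\phi(\alpha)=\psi_{i,j}(e_{i,j}+\alpha d_{i,j})$ for $\alpha$ in an open interval containing $[0,1]$; this is well defined because $\psi_{i,j}$ is $C^2$ on a neighborhood of $\Omega_{i,j}$. By construction $\phi(1)-\phi(0)=\psi_{i,j}(e'_{i,j})-\psi_{i,j}(e_{i,j})=\mathrm{CEff}_{i,j}^{\mathrm{exact}}$, using the identification of $\psi_{i,j}$ with the saliency objective in Eq.~\eqref{eq:app-psi} and the fact that $\hat{y}_i$ is held fixed. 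The chain rule gives $\phi'(\alpha)=\langle\nabla\psi_{i,j}(e_{i,j}+\alpha d_{i,j}),d_{i,j}\rangle$ and $\phi''(\alpha)=d_{i,j}^{\top}\nabla^2\psi_{i,j}(e_{i,j}+\alpha d_{i,j})\,d_{i,j}$. I then check that $\nabla\psi_{i,j}(e_{i,j})=\nabla_{e_{i,j}}\mathcal{L}_{\mathrm{sal}}(\tilde{T}_i)$. This holds because $\psi_{i,j}$ varies only the $j$-th unit, so its gradient is exactly the partial gradient block with respect to $e_{i,j}$. Hence $\phi'(0)=a_{i,j}$.

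Next I apply Taylor's theorem with integral remainder to the $C^2$ function $\phi$ on $[0,1]$:
\[
\phi(1)=\phi(0)+\phi'(0)+\int_0^1(1-\alpha)\,\phi''(\alpha)\,d\alpha .
\]
This identifies $R_{i,j}=\int_0^1(1-\alpha)\phi''(\alpha)\,d\alpha$, which matches the definition $R_{i,j}=\mathrm{CEff}_{i,j}^{\mathrm{exact}}-a_{i,j}$ and so gives the identity in Eq.~\eqref{eq:te-remainder}. The Lagrange form would also work, but the integral form avoids any mean-value subtleties. To bound the remainder, I use $|\phi''(\alpha)|\le\|\nabla^2\psi_{i,j}\|_{\mathrm{op}}\|d_{i,j}\|_2^2\le H_{i,j}\|d_{i,j}\|_2^2$ for every $\alpha\in[0,1]$. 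This follows from Cauchy--Schwarz together with the definition of the operator norm, applied at points of $\Omega_{i,j}$ where the supremum bound holds. Integrating $(1-\alpha)$ over $[0,1]$ gives the factor $1/2$, so $|R_{i,j}|\le\tfrac{H_{i,j}}{2}\|d_{i,j}\|_2^2$.

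Eq.~\eqref{eq:te-abs} then follows from the reverse triangle inequality:
\[
\big||\mathrm{CEff}_{i,j}^{\mathrm{exact}}|-|a_{i,j}|\big|\le|\mathrm{CEff}_{i,j}^{\mathrm{exact}}-a_{i,j}|=|R_{i,j}|,
\]
combined with the bound just obtained.

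No step here is a real obstacle. The only points needing care are bookkeeping. First, the gradient of $\psi_{i,j}$ at $e_{i,j}$ must coincide with the block of the full gradient of $\mathcal{L}_{\mathrm{sal}}$ that is used in $a_{i,j}$. Second, the fixed $\hat{y}_i$ keeps $\psi_{i,j}$ a single smooth function along the whole segment: no $\arg\max$ switching occurs, so the smoothness hypothesis is meaningful there. I would state both points explicitly before invoking Taylor's theorem.
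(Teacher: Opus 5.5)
Your proposal is correct and follows essentially the same route as the paper: a second-order Taylor expansion of $\psi_{i,j}$ along $\Omega_{i,j}$, the operator-norm bound $|d_{i,j}^\top\nabla^2\psi_{i,j}\,d_{i,j}|\le H_{i,j}\|d_{i,j}\|_2^2$, and the reverse triangle inequality for Eq.~\eqref{eq:te-abs}. The only difference is cosmetic: you use the integral form of the remainder on the one-dimensional restriction $\phi$, whereas the paper invokes the multivariate Lagrange form with an intermediate point $\xi_{i,j}\in\Omega_{i,j}$; both give the same factor $\tfrac{1}{2}$.
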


\begin{remark}[ReLU and smoothness]
Networks with ReLU are not twice differentiable at activation boundaries. The stated remainder bound therefore applies when the replacement segment remains within a single differentiable activation region. If the segment crosses activation boundaries, the \(C^2\) assumption of Theorem~\ref{thm:te} need not hold. For smooth activations such as GELU or SiLU, a locally bounded Hessian can instead be assumed on the finite replacement segment. The theorem requires only this local bound, not a global Hessian bound.
\end{remark}

\begin{proof}
Let $\psi_{i,j}$ be defined as in
Eq.~\eqref{eq:app-psi}. By Taylor's theorem with Lagrange remainder, there exists a point $\xi_{i,j}\in\Omega_{i,j}$ such that
\begin{equation}
\begin{aligned}
\psi_{i,j}(e'_{i,j})
&=
\psi_{i,j}(e_{i,j})
+
\big\langle \nabla\psi_{i,j}(e_{i,j}),\, d_{i,j} \big\rangle \\
&\quad+
\frac{1}{2}\,
d_{i,j}^\top
\nabla^2\psi_{i,j}(\xi_{i,j})\,
d_{i,j}.
\end{aligned}
\end{equation}
Since
$\nabla\psi_{i,j}(e_{i,j})
=\nabla_{e_{i,j}}\mathcal{L}_{\mathrm{sal}}(\tilde{T}_i)$
and
$\mathrm{CEff}_{i,j}^{\mathrm{exact}}
=\psi_{i,j}(e'_{i,j})-\psi_{i,j}(e_{i,j})$,
subtracting $\psi_{i,j}(e_{i,j})$ from both sides identifies the remainder as
\begin{equation}
R_{i,j}
=\mathrm{CEff}_{i,j}^{\mathrm{exact}}-a_{i,j}
=
\frac{1}{2}\,
d_{i,j}^\top
\nabla^2\psi_{i,j}(\xi_{i,j})\,
d_{i,j}.
\end{equation}
The Hessian bound implies
\begin{equation}
|R_{i,j}|
\le
\frac{1}{2}
\big\|\nabla^2\psi_{i,j}(\xi_{i,j})\big\|_{\mathrm{op}}
\|d_{i,j}\|_2^2
\le
\frac{H_{i,j}}{2}\,
\|d_{i,j}\|_2^2.
\end{equation}
Finally, since $|\mathrm{CEff}_{i,j}^{\mathrm{exact}}|=|a_{i,j}+R_{i,j}|$, the reverse triangle inequality
$\big|\,|a_{i,j}+R_{i,j}|-|a_{i,j}|\,\big|\le|R_{i,j}|$ proves Eq.~\eqref{eq:te-abs}.
\end{proof}
The theorem therefore identifies the regime in which the signed first-order term is a faithful local surrogate: its approximation
error is controlled by the local curvature and grows quadratically with the replacement distance.
\subsection{Non-Canceling Taylor Evidence Scoring}
\label{app:te-l1}
The magnitude of the signed first-order term can be reduced by cancellation among channel-wise contributions with opposite signs. We therefore motivate GAUGE's non-canceling aggregation and establish its relation to the exact counterfactual effect. The signed first-order term can be written channel-wise as
\[
a_{i,j}
=\sum_{c=1}^{C}
\frac{\partial \mathcal{L}_{\mathrm{sal}}(\tilde{T}_i)}
{\partial e_{i,j,c}}\,
d_{i,j,c}.
\]
GAUGE instead sums the absolute channel-wise contributions. This avoids sign cancellation, although the resulting score may exceed the magnitude of the signed first-order term, \(|a_{i,j}|\). 
Specifically,
\begin{equation}
\begin{aligned}
\mathrm{TE}_{i,j}
&=
\operatorname{sg}\!\left(
\sum_{c=1}^{C}
\left|
\frac{\partial \mathcal{L}_{\mathrm{sal}}(\tilde{T}_i)}
{\partial e_{i,j,c}}\,
d_{i,j,c}
\right|
\right) \\
&=
\operatorname{sg}\!\left(
\big\|
\nabla_{e_{i,j}}\mathcal{L}_{\mathrm{sal}}(\tilde{T}_i)
\odot d_{i,j}
\big\|_1
\right).
\end{aligned}
\label{eq:app-te-l1-score}
\end{equation}
The stop-gradient operator does not change the numerical value of $\mathrm{TE}_{i,j}$ and therefore does not affect the inequalities below; it only prevents the task loss from differentiating through the gradient-based score.

\begin{corollary}
\label{cor:upper}
\normalfont
Because \(\operatorname{sg}(\cdot)\) does not alter the forward value, the triangle inequality yields
\[
|a_{i,j}|\le \mathrm{TE}_{i,j}.
\]
Thus, $\mathrm{TE}_{i,j}$ upper-bounds the magnitude of the signed first-order term and serves as a non-canceling channel-wise magnitude surrogate. Combining this with Theorem~\ref{thm:te} gives
\begin{equation}
\big|\mathrm{CEff}_{i,j}^{\mathrm{exact}}\big|
\le
\mathrm{TE}_{i,j}
+
\frac{H_{i,j}}{2}\,\|d_{i,j}\|_2^2,
\label{eq:te-l1-upper}
\end{equation}
which is Eq.~\eqref{eq:te_ceff_bound} of the main text.
\end{corollary}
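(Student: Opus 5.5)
The corollary follows from two elementary inequalities chained together. The first is a channel-wise triangle inequality, which bounds the signed first-order term by the Taylor evidence score. The second is the remainder bound of Theorem~\ref{thm:te}.

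\emph{Step 1.} Since $\operatorname{sg}(\cdot)$ is the identity in the forward pass, the numerical value of $\mathrm{TE}_{i,j}$ equals $\sum_{c=1}^{C}\bigl|\partial_{e_{i,j,c}}\mathcal{L}_{\mathrm{sal}}(\tilde{T}_i)\,d_{i,j,c}\bigr|$. Writing $a_{i,j}$ in its channel-wise form $\sum_{c=1}^{C}\partial_{e_{i,j,c}}\mathcal{L}_{\mathrm{sal}}(\tilde{T}_i)\,d_{i,j,c}$, the finite triangle inequality $|\sum_c x_c|\le\sum_c|x_c|$ gives $|a_{i,j}|\le\mathrm{TE}_{i,j}$ immediately. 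Nothing beyond finiteness of $C$ is needed here, and this step does not use the smoothness hypothesis.

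\emph{Step 2.} Under the hypotheses of Theorem~\ref{thm:te}, we have $\mathrm{CEff}_{i,j}^{\mathrm{exact}}=a_{i,j}+R_{i,j}$ with $|R_{i,j}|\le\tfrac{H_{i,j}}{2}\|d_{i,j}\|_2^2$. The triangle inequality then yields
\[
|\mathrm{CEff}_{i,j}^{\mathrm{exact}}|\le|a_{i,j}|+|R_{i,j}|\le\mathrm{TE}_{i,j}+\tfrac{H_{i,j}}{2}\|d_{i,j}\|_2^2 ,
\]
where the last inequality uses Step~1. Equivalently, one can start from the one-sided consequence of Eq.~\eqref{eq:te-abs}, namely $|\mathrm{CEff}_{i,j}^{\mathrm{exact}}|\le|a_{i,j}|+\tfrac{H_{i,j}}{2}\|d_{i,j}\|_2^2$, and substitute Step~1.

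There is no real obstacle. The only point that needs care is bookkeeping: the second inequality in the corollary inherits the assumptions of Theorem~\ref{thm:te}, namely $C^2$ regularity on a neighborhood of $\Omega_{i,j}$ and the operator-norm Hessian bound $H_{i,j}$. The statement should be read as holding under those assumptions, with $\hat{y}_i$ held fixed as in the theorem. By contrast, the first inequality $|a_{i,j}|\le\mathrm{TE}_{i,j}$ holds unconditionally whenever the gradient exists.
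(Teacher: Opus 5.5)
Your proposal is correct and matches the paper's argument: since $\operatorname{sg}(\cdot)$ is the identity in the forward pass, the channel-wise triangle inequality gives $|a_{i,j}|\le\mathrm{TE}_{i,j}$, and combining this with $\mathrm{CEff}_{i,j}^{\mathrm{exact}}=a_{i,j}+R_{i,j}$ and the remainder bound of Theorem~\ref{thm:te} yields the stated inequality. Your note that the second inequality inherits the theorem's hypotheses ($C^2$ regularity near $\Omega_{i,j}$, the Hessian bound $H_{i,j}$, and fixed $\hat{y}_i$), while the first holds unconditionally, is accurate; the paper leaves this implicit.
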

The Hessian term appears only in the theoretical error bound; GAUGE neither estimates \(H_{i,j}\) nor uses it during training.

\subsection{Choice of Saliency Objective}
\label{app:logit-vs-ce}
The quality of the resulting evidence score also depends on the saliency objective. We therefore compare the predicted-class logit with negative log-likelihood and show why the former avoids softmax-induced gradient saturation. One alternative is to use predicted-class cross-entropy, equivalently the negative log-likelihood (NLL), as the saliency target:
\begin{equation}
\mathcal{L}_{\mathrm{NLL}}(\tilde{T}_i)
=
-\log p_i(\hat{y}_i).
\end{equation}
Its gradient with respect to the logits is
\begin{equation}
\nabla_{z_i}\mathcal{L}_{\mathrm{NLL}}
=
p_i-\mathbf{1}_{\hat{y}_i}.
\end{equation}
Here, $\mathbf{1}_{\hat{y}_i}$ is the one-hot vector of the ungated predicted class. By the chain rule, the evidence-unit gradient is
\begin{equation}
\nabla_{e_{i,j}}\mathcal{L}_{\mathrm{NLL}}
=
\left(\frac{\partial z_i}{\partial e_{i,j}}\right)^{\!\top}
\left(p_i-\mathbf{1}_{\hat{y}_i}\right).
\end{equation}
For a confident model, \(p_i\approx\mathbf{1}_{\hat{y}_i}\). When the logits--evidence-unit Jacobian is bounded, the factor
\(p_i-\mathbf{1}_{\hat{y}_i}\) can make the evidence-unit gradient very small.

NLL-based saliency may therefore underestimate the contribution of informative evidence units due to softmax saturation.
In contrast, the gradient of \(\mathcal{L}_{\mathrm{sal}}=-z_{i,\hat{y}_i}\) with respect to the logits is the constant
\(-\mathbf{1}_{\hat{y}_i}\), bypassing the softmax and avoiding saturation. This is consistent with attribution methods that operate at the pre-softmax level~\cite{shrikumar2017deeplift,sundararajan2017axiomatic}. Accordingly, GAUGE combines a signed Taylor term for pass-efficient local approximation, \(\ell_1\) aggregation to prevent channel-wise cancellation, and the predicted-class logit to avoid softmax saturation.

\section{Experimental Configuration}
\subsection{Datasets and Input Configurations}
\noindent\textbf{Datasets.}
\label{app:datasets}
Table~\ref{tab:dataset-statistics} summarizes the six multimodal classification benchmarks derived from five datasets. PolyMNIST, MST (MNIST--SVHN--Text), and bimodal CelebA~\cite{sutter2021generalized} are three simulated multimodal benchmarks. PolyMNIST contains five image views of the same digit. MST consists of three heterogeneous modalities: MNIST (M), SVHN (S), and synthetic text (T). CelebA pairs face images (I) with attribute-based text descriptions (T). We further evaluate GAUGE on DVM and two cardiac classification benchmarks derived from UK Biobank. DVM~\cite{huang2022dvm} combines vehicle images with \(17\) tabular features, including \(4\) categorical and \(13\) continuous features. The coronary artery disease (CAD) and myocardial infarction (Infarction) benchmarks~\cite{sudlow2015uk} each combine cardiac MR images with \(75\) disease-related tabular features, including \(26\) categorical and \(49\) continuous features. Because both diseases have low prevalence, class-balanced subsets are used for training. The dataset splits follow DyMo~\cite{du2026inference}.
\begin{table*}[!t]
\centering
\small
\setlength{\tabcolsep}{5pt}
\begin{tabular}{l l r r r r}
\toprule
Dataset/Task & Modalities & Train & Validation & Test & Classes \\
\midrule
PolyMNIST  & Five image views       & 60,000    & 3,000  & 7,000   & 10 \\
MST        & MNIST, SVHN, text      & 1,121,360 & 60,000 & 140,000 & 10 \\
CelebA     & Image, text            & 162,770   & 19,962 & 19,867  & 2 \\
DVM        & Image, tabular         & 70,565    & 17,642 & 88,207  & 283 \\
CAD        & MR image, tabular      & 3,482     & 6,510  & 3,617   & 2 \\
Infarction & MR image, tabular      & 1,552     & 6,510  & 3,617   & 2 \\
\bottomrule
\end{tabular}
\caption{Dataset statistics for the six benchmarks, using the splits of \citet{du2026inference}. CAD and Infarction are constructed from UK Biobank.}
\label{tab:dataset-statistics}
\end{table*}

\noindent\textbf{Missingness Protocols.}
\label{app:feature-missing}
Following DyMo~\cite{du2026inference}, we use the same mask-generation procedure and benchmark-specific missingness protocols. For PolyMNIST, each sample randomly misses \(\eta\times100\%\) of its five image modalities, and Table~\ref{tab:missing-modality} reports \(\eta\in\{0,0.6,0.8\}\). For MST, we report the complete-input setting (\(\emptyset\)) and the
missing-modality subsets \(\{\mathrm{S},\mathrm{T}\}\) and \(\{\mathrm{M},\mathrm{T}\}\). For CelebA, we report \(\emptyset\), \(\{\mathrm{I}\}\), and \(\{\mathrm{T}\}\). For both benchmarks, each set denotes the modalities removed from the input. For DVM, CAD, and Infarction, the imaging modality remains observed, while each sample randomly misses \(\gamma\times100\%\) of its tabular features. Table~\ref{tab:missing-modality} reports \(\gamma\in\{0,0.9,1\}\) for all three benchmarks. For CAD and Infarction, Fig.~\ref{fig:missing_rate} additionally includes \(\gamma\in\{0.1,0.3,0.5,0.7\}\). Here, \(\eta=0\) or \(\gamma=0\) corresponds to complete input, and \(\gamma=1\) indicates that the entire tabular modality is missing. To connect the feature-level protocol above with the formulation in the main text, let \(t\) denote the tabular modality. For \(0<\gamma<1\), \(t\) remains partially observed and therefore belongs to \(\mathcal{O}_i\). Its missing entries are reconstructed by the frozen TIP imputer used in the DyMo pipeline:
\[
\tilde{x}_i^{(t)}
=\Gamma_t\!\big(
\{x_i^{(m)}\}_{m\in\mathcal{O}_i\setminus\{t\}},\,
x_{i,\mathrm{obs}}^{(t)}
\big),
\]
where \(x_{i,\mathrm{obs}}^{(t)}\) denotes the observed tabular entries. The completed modality \(\bar{x}_i^{(t)}\) retains the observed entries and fills the missing positions with the corresponding imputed values. When \(\gamma=1\), the tabular modality belongs to \(\mathcal{U}_i\), reducing to whole-modality imputation. In all cases, GAUGE operates only on the completed input \(\bar{X}_i\) produced by the frozen DyMo/TIP pipeline.

\subsection{Baselines}
\label{app:baselines}
We compare GAUGE with nine baselines and summarize their sources and configurations. The recovery-based baselines are MultiAE~\cite{ngiam2011multimodal}, MoPoE~\cite{sutter2021generalized}, M3Care~\cite{zhang2022m3care}, and
OnlineMAE~\cite{woo2023towards}; the recovery-free baselines are ModDrop~\cite{neverova2016moddrop}, MTL~\cite{ma2022missing}, MAP~\cite{lee2023multimodal}, and MUSE~\cite{wu2024muse}. We also include the dynamic-recovery baseline DyMo~\cite{du2026inference} with the cosine-distance reward, denoted as DyMo$_c$. GAUGE and our reproduced DyMo$_c$ share the same backbone and frozen imputation pipeline, making DyMo$_c$ the closest architecture-controlled baseline. Except for our reproduced DyMo$_c$, the results in Table~\ref{tab:missing-modality} are taken from Tables~1, S3, and S4 of \citet{du2026inference} under the same missingness protocols; $\dagger$ marks their incomplete-simulation variants (MultiAE, MoPoE, M3Care, and MAP).
Methods are reported only for the settings they support; unsupported entries are marked ``-''. Per-baseline configurations (e.g., the drop probability $p{=}0.5$ of ModDrop and MUSE, MAP prompt lengths, and the encoder initialization of OnlineMAE) are likewise unchanged from that work. Table~\ref{tab:missing-modality} focuses on methods explicitly designed or adapted for incomplete multimodal learning. Dynamic fusion methods such as QMF, DynMM, and PDF, which were evaluated by \citet{du2026inference} on observed-plus-recovered inputs, are not repeated in this table.

\subsection{Implementation Details}
\label{app:impl}
\noindent\textbf{Model Initialization.}
We adopt the dataset-specific backbones and frozen imputers from DyMo~\cite{du2026inference}. A MoPoE multimodal VAE is used to reconstruct missing modalities on
PolyMNIST, MST, and CelebA, while the TIP imputer~\cite{du2024tip} reconstructs missing tabular features on DVM,
CAD, and Infarction. The modality-specific encoders, multimodal Transformer, and classifier are initialized from our reproduced DyMo$_c$ checkpoints using the
architectures and hyperparameters reported in Table~S2 of \citet{du2026inference}. During fine-tuning, the encoders, Transformer, classifier, and gate
parameters \((\tau,\rho)\) are updated, whereas all imputer parameters remain frozen. We use the same dataset preprocessing as DyMo.

\noindent\textbf{Evidence-Unit Layouts.}
The evidence-unit layout is fixed by the inherited dataset-specific encoders. Each PolyMNIST view emits \(4\) units, giving \(N{=}20\) across its
five views. Each modality in MST and CelebA emits one global embedding, giving \(N{=}3\) and \(N{=}2\), respectively. 
DVM emits \(16\) image units and \(17\) tabular units (\(N{=}33\)), while CAD and Infarction each emit \(16\) image units and \(75\) tabular units (\(N{=}91\)). These layouts determine the branch used by the \(N\)-adaptive normalization rule (Appendix~\ref{app:norm-ablation}). Because each modality in MST and CelebA emits a single embedding, GAUGE reduces to modality-level gating on these two benchmarks.
\begin{table}[h]
\centering
\small
\setlength{\tabcolsep}{3.5pt}
\renewcommand{\arraystretch}{1.05}
\begin{tabular}{@{}lcccc@{}}
\toprule
Dataset & LR & Batch & Epochs & Metric \\
\midrule
PolyMNIST  & $10^{-3}$          & 256 & 100 & Acc. \\
MST        & $10^{-4}$          & 256 & 20  & Acc. \\
CelebA     & $10^{-3}$          & 256 & 20  & Acc. \\
DVM        & $10^{-4}$          & 256 & 300 & Acc. \\
CAD        & $10^{-3}$          & 128 & 300 & AUC \\
Infarction & $3{\times}10^{-4}$ & 128 & 300 & AUC \\
\bottomrule
\end{tabular}
\caption{Dataset-specific fine-tuning settings for GAUGE.}
\label{tab:training-settings}
\end{table}

\noindent\textbf{Optimization and Gate Settings.}
We fine-tune GAUGE using Adam with zero weight decay and the dataset-specific learning rates, batch sizes, and maximum numbers of epochs listed in Table~\ref{tab:training-settings}. A new missingness mask is sampled for each minibatch following the
protocol in Appendix~\ref{app:datasets}. We use the zero reference \(e'_{i,j}=\mathbf{0}\). The gate threshold \(\tau\) is initialized to \(0\) for PolyMNIST, DVM, CAD, and Infarction, which use \(z\)-score normalization, and to \(0.5\) for MST and CelebA, which use raw scores. For all benchmarks, \(\rho\) is initialized to \(0\), such that \(\exp(\rho)=1\), and we set \(\epsilon=10^{-6}\) and \(\epsilon_g=10^{-9}\). We select the checkpoint with the best validation accuracy on PolyMNIST, MST, CelebA, and DVM or the best validation AUC on CAD and Infarction.

\section{Supplementary Experiments}
\label{app:supp}
\subsection{Ablation on Evidence-Unit Scoring}
\label{app:scoring}
\begin{table*}[t]
\centering
\begin{tabular}{lccccccc}
\toprule
\multirow{2}{*}{Score Function}
& PolyMNIST & MST & CelebA & DVM & CAD & Infarction
& \multirow{2}{*}{Avg.} \\
& $\eta{=}0.8$
& $\{\mathrm{M},\mathrm{T}\}$
& $\{\mathrm{T}\}$
& $\gamma{=}1$
& $\gamma{=}1$
& $\gamma{=}1$
& \\
\midrule
Random score    & 97.14 & 85.01 & 92.56 & 94.22 & 66.74 & 70.37 & 84.34 \\
Feature norm    & 97.16 & 84.89 & 92.71 & 94.18 & 68.81 & 69.26 & 84.50 \\
Gradient norm   & 96.93 & 84.97 & 92.84 & 94.14 & 68.64 & 70.77 & 84.72 \\
Attention score & 96.87 & 85.03 & 92.76 & 94.29 & 67.92 & 69.33 & 84.37 \\
Taylor evidence score & \textbf{97.61} & \textbf{85.26} & \textbf{93.16} & \textbf{94.42} & \textbf{70.41} & \textbf{72.86} & \textbf{85.62} \\
\bottomrule
\end{tabular}
\caption{Ablation of evidence-unit scoring under the most severe
missingness setting for each benchmark. Accuracy (\%) is reported for
PolyMNIST, MST, CelebA, and DVM, while AUC (\%) is reported for CAD and
Infarction. All variants use the same score-to-gate pipeline and differ
only in the raw scoring function. \textbf{Bold} marks the best result.}
\label{tab:ablation-scoring}
\end{table*}

To examine whether GAUGE benefits specifically from its reference-based Taylor evidence score, we compare it with four alternative scoring functions: random score, feature norm, gradient norm, and attention score. For a controlled comparison, all variants use the same \(N\)-adaptive normalization, continuous gate, attention-bias injection, initialization, and fine-tuning settings, differing only in the raw evidence-unit score. For evidence unit \(e_{i,j}\), \emph{random score} samples an independent value from \(\mathcal{N}(0,1)\) at each scoring pass; \emph{feature norm} uses \(\|e_{i,j}\|_2\); and \emph{gradient norm} uses \(\|\nabla_{e_{i,j}}\mathcal{L}_{\mathrm{sal}}(\tilde{T}_i)\|_1\). \emph{Attention score} uses the final-layer
\([\mathtt{CLS}]\)-to-unit attention from the ungated forward pass, averaged across attention heads. The Taylor evidence score is the default score defined in Eq.~\eqref{eq:taylor_score}, using the zero reference
\(e'_{i,j}=\mathbf{0}\). As shown in Table~\ref{tab:ablation-scoring}, the Taylor evidence score
consistently outperforms all four alternatives across all six benchmarks, achieving the highest average performance of \(85.62\), which exceeds the strongest competing scorer, gradient norm, by
\(0.90\) points. These results support the effectiveness of GAUGE's reference-based Taylor scoring mechanism, showing that combining the prediction-aware gradient with the unit-to-reference displacement yields a stronger gating signal than the four alternative scores.

\subsection{Ablation on Score Normalization}
\label{app:norm-ablation}
GAUGE adopts an \(N\)-adaptive normalization rule because per-sample standardization behaves differently across encoder layouts. When only a few evidence units are available, the sample mean and variance are determined by too few scores, causing the standardized values to become highly constrained and discarding their absolute
magnitudes. For example, when \(N{=}2\), any two distinct scores are mapped to \(-1/\sqrt{2}\) and \(1/\sqrt{2}\), regardless of their original scale. In contrast, when many units are present, raw Taylor evidence scores can exhibit substantial sample-dependent offsets and scales, making the shared gate threshold and temperature difficult to calibrate consistently. To verify whether these two regimes require different treatments, we fix the raw score to the Taylor evidence score and compare the default \(N\)-adaptive rule in Eq.~\eqref{eq:zscore} with two fixed strategies: \emph{Always \(z\)-score} and \emph{Always raw}.
The comparison includes two small-\(N\) benchmarks, MST (\(N{=}3\)) and CelebA (\(N{=}2\)), and two large-\(N\) benchmarks, CAD and
Infarction (\(N{=}91\)). All other model components and training settings are kept unchanged. As shown in Table~\ref{tab:ablation-norm}, applying \(z\)-score normalization uniformly degrades performance in the small-\(N\) regime, reducing accuracy by \(0.23\) points on MST and \(0.58\)
points on CelebA relative to raw scores. This confirms that standardization based on only two or three units can over-constrain their relative values and remove useful magnitude information. Conversely, using raw scores uniformly reduces AUC by \(1.94\) points on CAD and \(1.67\) points on Infarction, showing that normalization is important for calibrating the scores of large evidence-unit sets. These results support the \(N\)-adaptive rule: raw scores are preferable for small \(N\), whereas per-sample standardization is beneficial for large \(N\).
\begin{table}[h]
\centering
\small
\renewcommand{\arraystretch}{1.15}
\resizebox{\columnwidth}{!}{%
\begin{tabular}{lcccc}
\toprule
\multirow{2}{*}{Normalization}
& MST
& CelebA
& CAD
& Infarction \\
& {\scriptsize $N{=}3$}
& {\scriptsize $N{=}2$}
& {\scriptsize $N{=}91$}
& {\scriptsize $N{=}91$} \\
\midrule
Always $z$-score
& 85.03
& 92.58
& \textbf{70.41}
& \textbf{72.86} \\
Always raw
& \textbf{85.26}
& \textbf{93.16}
& 68.47
& 71.19 \\
\textbf{$N$-adaptive (default)}
& \textbf{85.26}
& \textbf{93.16}
& \textbf{70.41}
& \textbf{72.86} \\
\bottomrule
\end{tabular}%
}
\caption{Ablation of score normalization under each benchmark's most severe
missingness setting. Accuracy (\%) is reported for MST and CelebA, while
AUC (\%) is reported for CAD and Infarction. \textbf{Bold} marks the best
result.}
\label{tab:ablation-norm}
\end{table}

\subsection{Ablation on Evidence-Unit Granularity}
\label{app:evidence-units}
We examine whether finer-grained image evidence units improve GAUGE's unit-level control. A finer spatial grid exposes more localized regions as separate evidence units, each with its own Taylor evidence score and gate. We test this by varying the input resolution on CAD and Infarction at \(\gamma=1\). Input resolutions of \(96\times96\), \(128\times128\), \(160\times160\), \(192\times192\), and \(256\times256\) pixels yield \(3\times3\), \(4\times4\), \(5\times5\), \(6\times6\), and \(8\times8\) evidence-unit grids, respectively. The native \(128\times128\) input corresponds to a \(4\times4\) grid. We keep the model architecture, optimization setup, and gating configuration fixed across all resolution settings. Only the image positional embeddings are reinitialized, while all other pre-trained weights are retained and fine-tuned. Fig.~\ref{fig:evidence_units} shows that finer grids generally improve test AUC on both benchmarks. CAD AUC increases from \(69.10\) at \(3\times3\) to \(70.87\) at \(8\times8\), while Infarction reaches its highest AUC of \(73.90\) at \(6\times6\), with no further gain at \(8\times8\).
Because bilinear upsampling adds no new source-pixel information, the gains beyond the native \(4\times4\) grid suggest that finer evidence-unit granularity can improve localized scoring and gating. On Infarction, performance plateaus beyond \(6\times6\), indicating limited benefit from further grid refinement.

\begin{figure}[!h]
\centering
\includegraphics[width=\columnwidth]{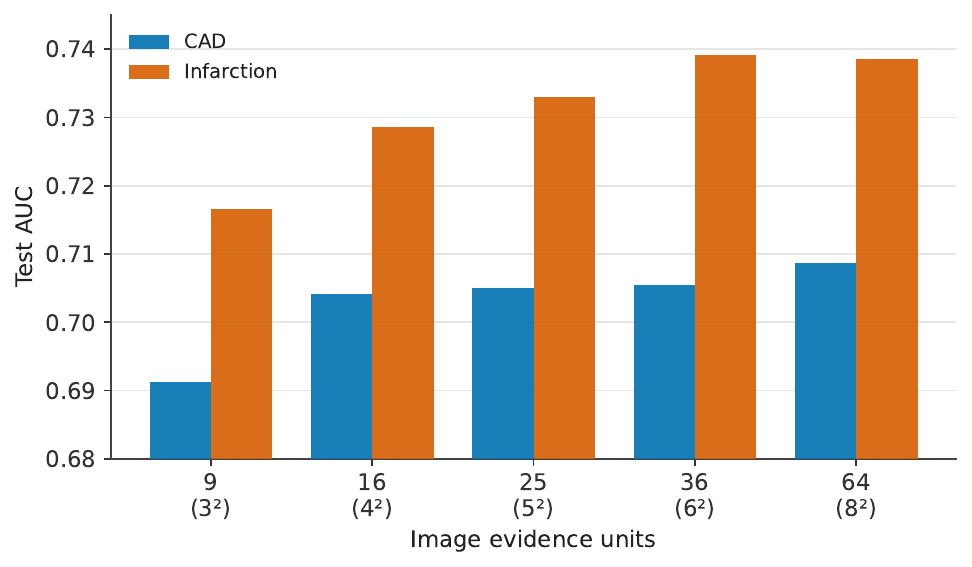}
\caption{Effect of image resolution and the resulting evidence-unit
granularity on CAD and Infarction at \(\gamma=1\).}
\label{fig:evidence_units}
\end{figure}

\section{Scoring Cost Comparison}
\label{app:cost}
Table~\ref{tab:cost} reports the runtime of the scoring stage, measured from the encoded sequence \(\tilde{T}_i\) to the \(N\) evidence-unit scores. We exclude imputation, modality encoding, and the final gated forward
pass, as these operations are identical for both methods. Exact counterfactual scoring performs one ungated forward pass followed by \(N\) sequential intervention forward passes, with batch size \(1\). In contrast, GAUGE computes all \(N\) Taylor evidence scores through a single ungated forward--backward pass. For each configuration, runtime is averaged over \(500\) test samples after \(20\) warm-up iterations. We call \texttt{torch.cuda.synchronize()} before and after each measurement to account for asynchronous GPU execution. All timing measurements are conducted on a single NVIDIA Quadro RTX~8000.

\end{document}